\def\eqref#1{equation~\ref{#1}}
\def\1{\bm{1}}
\DeclareMathAlphabet{\mathsfit}{\encodingdefault}{\sfdefault}{m}{sl}
\SetMathAlphabet{\mathsfit}{bold}{\encodingdefault}{\sfdefault}{bx}{n}
\newtheorem{theorem}{Theorem}[section]
\newtheorem{proposition}[theorem]{Proposition}
\definecolor{blueViolet}{HTML}{8a2be2}
\newtheorem*{proposition*}{Proposition}
\newif\ifcomments
    \def\picomment#1{}
    \def\sacomment#1{{$\!$\color{cyan} [SA: #1]}}
    \def\tncomment#1{}
    \def\rbcomment#1{}
    \def\sacomment#1{}
\title{ Constraint-Aware Flow Matching via Randomized Exploration}
\author{\name Zhengyan Huan \email  Zhengyan.Huan@tufts.edu \\
      \addr Department of Electrical and Computer Engineering\\ Tufts University\\
      NSF AI Institute for Artificial Intelligence and Fundamental
Interactions
      \AND
      \name Jacob Boerma
 \email  Jacob.Boerma@tufts.edu \\
      \addr  Department of Computer Science\\ Tufts University
      \AND
      \name
 Li-Ping Liu 
 \email  Liping.Liu@tufts.edu \\
      \addr  Department of Computer Science\\ Tufts University
      \AND
      \name Shuchin Aeron  \email  shuchin@ece.tufts.edu\\
      \addr Department of Electrical and Computer Engineering\\ Tufts University\\
      NSF AI Institute for Artificial Intelligence and Fundamental
Interactions}
\begin{document}

\maketitle


\begin{abstract}

We consider the problem of designing constraint-aware flow matching (FM) models that address the issue of constraint violations commonly observed in vanilla generative models. We consider two scenarios, viz.: (a) when a differentiable distance function to the constraint set is given, and (b) when the constraint set is only available via queries to a membership oracle. For case (a), we propose a simple adaptation of the FM objective with an additional term that penalizes the distance between the constraint set and the generated samples. 
For case (b), we propose to employ randomization and learn a mean flow that is numerically shown to have a high likelihood of satisfying the constraints. This approach deviates significantly from existing works that require simple convex constraints, knowledge of a barrier function, or a reflection mechanism to constrain the probability flow. Furthermore, in the proposed setting we show that a two-stage approach, where both stages approximate the same original flow but with only the second stage probing the constraints via randomization, is more computationally efficient than the corresponding one-stage approach.
Through several synthetic cases of constrained generation, we numerically show that the proposed approaches achieve significant gains in terms of constraint satisfaction while matching the target distributions. As a showcase for a practical oracle-based constraint, we show how our approach can be used for training an adversarial example generator, using queries to a hard-label black-box classifier. We conclude with several future research directions. Our code is available at \url{https://github.com/ZhengyanHuan/FM-RE}.
\end{abstract}

\section{Introduction}
\label{sec:intro}

Implicit generative modeling, wherein one trains a neural network that transforms the input samples to samples potentially distributed according to the target distribution, has been at the forefront of modern AI and ML applications. {Among the most successful are those rooted in the well-established theoretical framework of stochastic differential equations and probability flow ordinary differential equations (ODEs), namely Score-Based Models \citep{yangSBM}, Flow Matching \citep{lipman2023flow, albergo2023stochastic, liuflow}, Bridge Matching  \citep{peluchetti2023diffusion}, and Denoising Diffusion Probabilistic Models \citep{NEURIPS2020_DDPM}.}

In many applications apart from matching the distribution of the data, it is also desired that the generated data does not violate constraints. In this context, there are two types of constraints that are considered, viz., (a) sample-wise constraints \citep{mirror, feng2025neural,loureflect,  reflectedfm, fishmanreflect,Christopher2024ConstrainedSW}, and (b) distributional constraints \citep{Khalafi}. In this work, we consider the problem of learning to generate while satisfying sample-wise constraints. These sample-wise constraints are important and arise in a number of applications, such as watermark generation \citep{mirror}, fluid dynamics \citep{feng2025neural}, and image generation with certain attributes.

In this context, we build upon the versatile framework of FM and make the following \textbf{main contributions}.
\begin{enumerate}[leftmargin=*]
\setlength \itemsep{-2pt}
    \item We formalize FM with constraints under two {cases, viz.:} (a) specification via a differentiable distance to the constraint set, and (b) specification via a membership oracle that can be queried during training. Setting (b) departs from the specific cases considered thus far in the literature (\textbf{see section \ref{sec:related_work} for contrast with related works}), but it covers all of them. For (b) we propose a randomization strategy to enable learning of a mean flow for the constrained FM setup.  
    
    \item For numerical efficiency, we further derive a two-stage approach. Both stages aim to match the flow, while only the second stage is jointly optimized through a randomized exploration of the constraint set for constraint satisfaction.

    \item On several synthetic examples, including non-convex, disconnected, and empty-interior constraints, our approach demonstrates the capability of both satisfying constraints and matching distributions.

    \item We show several applications of the method for constrained image generation as well as generating adversarial examples for hard-label black-box image classifiers. 
\end{enumerate}

\section{Problem Setup}
\label{sec:problem_setup}

Let $\mathcal{C}\subset \mathbb{R}^d$ denote a constraint set. We are given $n$ data points in $\mathbb{R}^d$ drawn i.i.d. from some unknown data distribution $q$, which is constrained on the set $\mathcal{C}$. That is, $\text{supp}(q)\subseteq \mathcal{C} $ where $\text{supp}(q)$ is the closure of the set of all points $x\in \mathbb{R}^d$ such that $q(x)>0$. We consider the problem of learning to generate further samples from $q$. Therefore, the objective can be given as 
\begin{equation}
    \text{Generate  } X \sim q,\quad\text{s.t.  
 } X \in \mathcal{C}.
\end{equation}
The $X \in \mathcal{C}$ requirement seems trivial since $\text{supp}(q)\subseteq \mathcal{C} $ is already assumed. However, conventional generative models often create samples that fail to obey this rule because $q$ is {only observed via the samples}, {$\text{supp}(q)$ may not cover the entire constraint set, and $\mathcal{C}$ is often implicitly satisfied, which in high-dimensional settings makes the problem highly non-trivial}.  In this paper, for a given sample $ x \in \mathbb{R}^d$, we consider two cases for constraint specification:
\begin{enumerate}[leftmargin=*]
\setlength \itemsep{-3pt}
    \item A differentiable distance between $x$ and $\mathcal{C}$ is known and can be given as 
\begin{equation}\label{eq:dxc}
    d(x,\mathcal{C}) = \inf_{z\in \mathcal{C}} \|x-z\|, 
\end{equation}
{for some norm $\| \cdot \|$. For simplicity we will take this to be the Euclidean norm.}
\item  Only access to a query oracle that outputs $\mathbf{1}_{\mathcal{C}}(x)$ is given. Here $\mathbf{1}_{\mathcal{C}}(x) = 1$ if $x \in \mathcal{C}$, and $0$ otherwise, denotes the usual indicator function of the constraints $\mathcal{C}$.
\end{enumerate}

We note that $d(\cdot,\mathcal{C})$ is available in cases when $\mathcal{C}$ is convex or is a smooth manifold, e.g., a subspace, or other simple cases such as the ones considered in \citet{mirror, reflectedfm}. The membership oracle is available or can be efficiently implemented for almost all constraints.

\paragraph{Background on flow matching:}

Define a stochastic process $X_t = \Psi_t(X_0, X_1) \in \mathbb{R}^d$ on $t \in [0,1]$ where the pair $(X_0, X_1) \sim \pi$ with marginals $q_0, q_1$ and where $\Psi_t(x_0, x_1): [0,1]\times \mathbb{R}^d \times \mathbb{R}^d \rightarrow \mathbb{R}^d$ defines paths in $\mathbb{R}^d$ that are twice differentiable in space and time and uniformly Lipschitz in time satisfying $\Psi_0(x_0, x_1) = x_0, \Psi_1(x_0, x_1) = x_1$. The stochastic process $X_t = \Psi_t(X_0, X_1)$ is referred to as a \emph{stochastic} interpolant between $q_0, q_1$ in  \citet{albergo2023stochastic}. Notably, $\Psi_t(x_0, x_1)$ can be selected by the \emph{user}. In this paper, we will only consider deterministic paths but one may also consider stochastic paths connecting the end points $x_0, x_1$. See \citet{albergo2023stochastic} for examples of such constructions.

Following \citet{albergo2023stochastic, liuflow, albergobuilding}, define the \emph{rectified} velocity field via $v^{\Psi}(z, t) = \mathbb{E}[\frac{d}{dt} X_t| X_t = z] = \mathbb{E}[\frac{d}{dt} \Psi_t| X_t = z]$. Then it can be shown that under some technical conditions \citep{albergobuilding}, the stochastic process $Z_t$ that is driven by $v^{\Psi}(z, t)$ via the ODE $\frac{d}{dt}Z_t = v^{\Psi}(Z_t, t)$ that the time marginals of $Z_t$ and $X_t$ are equal in distribution (see \citet{albergobuilding} for a proof).

Therefore, for generative modeling purposes, one learns to match the rectified flow via  $\arg \min_{\theta} \int_{0}^{1} \mathbb{E}[\| u_\theta(X_t, t) - v^{\Psi}(X_t, t)\|^2] dt,$
where the minimization is carried out over parameterized velocity fields $u_{\theta}: \mathbb{R}^d \times [0,1] \rightarrow \mathbb{R}^d$ parametrized by parameters $\theta$. As such this is an intractable optimization objective since it would entail simulating entire trajectories and approximating the conditional expectation of the true velocity field. But one can arrive at the following equivalence \citep{albergo2023stochastic, liuflow, albergobuilding}:
\begin{equation}
\begin{aligned}
    & \arg \min_{\theta} \int_{0}^{1} \mathbb{E}[\| u_\theta(X_t, t) - v^{\Psi}(X_t, t)\|^2] dt  
    = \arg \min_{\theta} \int_{0}^{1} \mathbb{E}[\| u_\theta(X_t, t) - \frac{d}{dt} \Psi_t(X_0, X_1)\|^2] dt, \notag
\end{aligned} 
\end{equation}
where we recall that in the equation above $X_t = \Psi_t(X_0, X_1)$.
Indeed given $\Psi_t$ and samples from $q_0, q_1$ one can efficiently approximate the latter objective via Monte-Carlo. This forms the basis of the general idea behind FM started by the seminal work \citet{lipman2023flow} that was improved with optimal transport based couplings in \citet{tongimproving}.  The learned $u_\theta$ can then be used to generate samples from $q_1$ starting with samples from $q_0$.

\section{Constraint-Aware Flow Matching}

In this paper, while we can pick among many choices of $\Psi_t(x_0,x_1)$ and the couplings with marginals $q_0, q_1$, we choose to work with the simplest: {linear interpolants} $\Psi_t(x_0,x_1) = tx_1 + (1-t)x_0$ and {the product coupling} $ q_0 \otimes q_1$ {implying that} $X_0 \perp X_1$.

Let $x^\theta_t$ be the (unique) solution to the ODE $\frac{d}{dt}z(t) = u_{\theta}(z(t), t), z(0) = x_0$.
We define the general constraint-aware flow matching (CAFM) problem as solving for:
\begin{equation}
\begin{aligned}
\label{eq:query_constraint}
    \arg \min_{\theta} \Big\{\int_{0}^{1} \mathbb{E}[\| u_\theta(X_t, t) - \frac{d}{dt} \Psi_t(X_0, X_1)\|^2] dt 
    - \lambda \mathbb{E} [\mathbf{1}_{\mathcal{C}}(X^{\theta}_{1})]\Big\},
\end{aligned}
\end{equation}
where $\mathcal{C}$ is the constraint set, $\lambda > 0$, and $\mathbf{1}_{\mathcal{C}}(\cdot)$ denotes the indicator function.
We note that if one matches the flow exactly, then by assumption that ${\tt supp}(q_1) \subseteq \mathcal{C}$, $\mathbb{E}[\mathbf{1}_{\mathcal{C}}(X_1^\theta)] = \mathbb{P}(X_1^\theta \in \mathcal{C}) = 1$.  But in practice, due to the finite capacity of the parameters to capture velocity fields and due to approximation via limited samples, the learned $u_\theta$ will not exactly match the original flow and hence one needs to drive it to also satisfy the constraints.

In cases where a distance function $d(\cdot,\mathcal{C})$ available, an alternative formulation for CAFM is:
\begin{equation}
\begin{aligned}
\label{eq:distace_constraint_prob}
    \arg \min_{\theta} \Big\{\int_{0}^{1} \mathbb{E}[\| u_\theta(X_t, t) - \frac{d}{dt} \Psi_t(X_0, X_1)\|^2] dt + {\beta}\,\, \mathbb{P}(d(X_{1}^\theta,\mathcal{C}) \geq \varepsilon)\Big\}, 
\end{aligned}
\end{equation}
%
{for some $\beta >0$} and for some $\varepsilon > 0$. Note {that by the} Markov inequality that $\mathbb{P}(d(X_{1}^\theta,\mathcal{C}) \geq \varepsilon) \leq \frac{\mathbb{E}[d(X_{1}^\theta, \mathcal{C})]}{\varepsilon}$ and therefore we can choose to instead solve: 
\begin{equation}
\begin{aligned}
\label{eq:distace_constraint}
    \arg \min_{\theta}\Big\{ \int_{0}^{1} \mathbb{E}[\| u_\theta(X_t, t) - \frac{d}{dt} \Psi_t(X_0, X_1)\|^2] dt +   \lambda \,\, \mathbb{E} [d(X_{1}^\theta, \mathcal{C})]\Big\}, 
\end{aligned}
\end{equation}
for some $\lambda=\beta/\varepsilon>0$.  Note that Markov's inequality allows for error, so the objective in \eqref{eq:distace_constraint} has a possibly different minimum to that in \eqref{eq:distace_constraint_prob}.

The two cases \eqref{eq:query_constraint} and \eqref{eq:distace_constraint} are fundamentally different for training purposes. While in the first case the constraints can only be probed by the flow via a membership oracle, in the second case the distance function directly yields a proxy for the probability that the constraints are violated. In the next section \ref{sec:methods} we present methods for both cases and for case  \eqref{eq:query_constraint}, we propose to use randomization in the flow to explore the constraints, yielding a mean flow for constrained FM.

\vspace{-2pt}
\section{Methods}
\label{sec:methods}
\vspace{-2pt}
\subsection{Constraint-Aware Flow Matching with Differentiable Distance (FM-DD)}

Following the objective in \eqref{eq:distace_constraint}, the training and sampling algorithms for FM-DD are given in Alg.~\ref{alg:FM-distance} and Alg.~\ref{alg:FM-sampling}. Note that we fix a forward Euler discretization:
\begin{equation}\label{eq:phimap}
    X_{t+\Delta t}^\theta = X_{t}^\theta +u_\theta(X_{t}^\theta, t)\Delta t, \quad
\end{equation}
with $X^\theta_0\sim p_0$. $\Delta t\ll 1$ is a selected interval for discretization.  The sample generated according to this procedure is $X_{1}^\theta$.

\begin{minipage}[t]{0.59\textwidth}
\begin{algorithm}[H] \setstretch{0.8} 
\caption{FM-DD training}
\label{alg:FM-distance}
 \begin{algorithmic}
  \State \textbf{Input:} 
$u_{\theta}$,  $\Delta t$, $N$, $d(\cdot,\mathcal{C})$, $\lambda$, $q_1$, learning rate $\eta$, batch size $B$ 
 \State \textbf{Output:} $ u_{\theta}$
 \State ~
\Repeat
\State $\mathcal{L}\gets 0$
 \For{$b= 1,2,\cdots,B$}
 \State Obtain $x_{1}^\theta$ according to Alg.~\ref{alg:FM-sampling}
\State $x_0\sim\mathcal{N}(\bm{0},\bm{I})$, $x_1\sim q_1$
\State $i\sim \text{Uniform}( [0: N-1])$,  ${t}\gets i\Delta t$
\State $\psi_t(x_0,x_1) \gets (1 - t)x_0 + tx_1$
\State \resizebox{0.81\textwidth}{!}{$\mathcal{L}\gets \mathcal{L}+\|u_\theta(\psi_t(x_0,x_1), t)-(x_1-x_0)\|^2+\lambda d(x^\theta_1,\mathcal{C})$}
\EndFor
\State ${\theta}\gets {\theta} - \eta \nabla_{\theta} \mathcal{L}$
\Until converged
\State \textbf{Return} $u_{\theta}$
\end{algorithmic}   
\end{algorithm}
\end{minipage}
\hfill
\begin{minipage}[t]{0.38\textwidth}
\begin{algorithm}[H] \setstretch{0.8} 
\caption{FM-DD sampling}
\label{alg:FM-sampling}
 \begin{algorithmic}
  \State \textbf{Input:} 
$u_{\theta}$, $\Delta t$
 \State \textbf{Output:}  $x_1$
\State ~
 \State  $x_0\sim \mathcal{N}(\bm{0}, \bm{I})$
\State $t\gets0$
\Repeat
\State $x_{t+\Delta t} \gets x_{t}+ u_{\theta}(x_{t},t)\Delta t$
\State $t\gets t+\Delta t$
\Until $t=1$
\State \textbf{Return} $x_1$

\end{algorithmic}   

\end{algorithm}
\end{minipage}


\subsection{Constraint-Aware Flow Matching via Randomized Exploration (FM-RE)}

We now deal with the objective in \eqref{eq:query_constraint} without an available differentiable distance, instead, only the membership oracle of constraint satisfaction is accessible. In a conventional FM model, the output $X_1^\theta$ is deterministic given the initial point $X_0$. As a result, the constraint term $\mathbb{E} [\mathbf{1}_{\mathcal{C}}(X^{\theta}_{1})]$ reduces to an indicator function evaluated at a deterministic point. Since the gradient of an indicator function is $0$ almost everywhere except on the boundary, the gradient with respect to 
$\theta$ vanishes almost everywhere. One way to address this issue is to introduce randomness into the sampling process, so that $X_1^\theta$ 
 becomes a random variable rather than a deterministic output. The constraint term then corresponds to a probability of satisfaction, yielding a differentiable expectation with respect to 
$\theta$ and enabling gradient-based optimization. To this end, we propose a method we refer to as FM-RE, which learns a flow in two parts. First, a flow is learned via vanilla FM. Then, for time steps after a given $t_0$, randomization is used to explore the constraints available via the membership oracle and derive a mean flow that has a high likelihood to satisfy the constraints. 
We note that this approach is used frequently in stochastic control and reinforcement learning \citep{sutton2014introduction}.

Instead of a deterministic velocity, we pick a randomized velocity in \eqref{eq:query_constraint}, defined as a random variable $\tilde{U}^{\theta,\sigma}(X_t,t,W_{\sigma}(t))$, where $W_{\sigma}(t)$ is a random variable that depends only on time $t$, is independent of $X_t$, and is parameterized by parameters $\sigma$. Thus, the objective  becomes:

\begin{equation}\label{eq:FM-REoriobj}
\begin{aligned}
&    \arg \min_{\theta, \sigma} \Big\{\int_{0}^{1} \mathbb{E}[\| \tilde{U}_{\theta,\sigma}(X_t,t,W_{\sigma}(t)) - \frac{d}{dt} \Psi_t(X_0, X_1)\|^2] dt {-} \lambda \mathbb{E} [\mathbf{1}_{\mathcal{C}}(X^{\theta,\sigma}_{1})]\Big\},\\
&\text{with}\quad \frac{d}{dt}  X^{\theta,\sigma}_{t} = \tilde{U}_{\theta,\sigma}(X_t,t,W_{\sigma}(t)), ~X^{\theta,\sigma}_0\sim q_0 .
\end{aligned}
\end{equation}

As a quick observation, by appealing to Jensen's inequality, we note that:
\begin{equation}\label{eq:jensen}
\begin{aligned}
    \mathbb{E}\left[\left\| \tilde{U}_{\theta,\sigma}(X_t,t,W_{\sigma}(t)) - \frac{d}{dt} \Psi_t(X_0, X_1)\right\|^2\right] &=  \mathbb{E}_{X_t}\left[\mathbb{E}_{W_{\sigma}(t)}\left[\left\| \tilde{U}_{\theta,\sigma}(X_t,t,W_{\sigma}(t)) - \frac{d}{dt} \Psi_t(X_0, X_1)\right\|^2\right]\right]
    \\&\geq \mathbb{E}_{X_t}\left[\left\| \mathbb{E}_{W_{\sigma}(t)}\left[\tilde{U}_{\theta,\sigma}(X_t,t,W_{\sigma}(t))\right] - \frac{d}{dt} \Psi_t(X_0, X_1)\right\|^2\right].
    \end{aligned}
\end{equation}

Therefore, if $\mathbb{E}_{W_{\sigma}(t)}\left[\tilde{U}_{\theta,\sigma}(X_t,t,W_{\sigma}(t))\right] $ is viewed as a mean flow, the FM objective part in \eqref{eq:FM-REoriobj} provides an upper bound for its FM loss. In this work, we specifically pick
\begin{equation}\label{eq:randomv}
    \tilde{U}^{\theta,\sigma}_t = \tilde{U}_{\theta,\sigma}(X_t,t,W_{\sigma}(t))=u_{\theta}(X_t,t)+\sigma_t W,
\end{equation}
where $W\sim \mathcal{N}(\bm{0},\bm{I}), \sigma_t\in \mathbb{R}^d$, and $\sigma$ is a collection of $\sigma_t$. 
The probability density function (PDF) of $\tilde{u}^{\theta,\sigma}_t\in \mathbb{R}^d$ based on $ x_t^{\theta, \sigma}$ can be denoted as
\begin{equation}
        \pi_{{\theta},\sigma}(\tilde{u}^{\theta,\sigma}_t | x_t^{\theta, \sigma}) = \mathcal{N}(\tilde{u}^{\theta,\sigma}_t | u_\theta(x_t^{\theta, \sigma},t), \sigma_t^2 \bm{I}).
\end{equation}

Similar to the previous subsection, we work with {a} forward Euler discretization of the stochastic velocity field for training purposes, i.e., $     X^{\theta,\sigma}_{t+\Delta t} = X^{\theta,\sigma}_t +\tilde{U}^{\theta,\sigma}_t\Delta t$.
{Pick an integer $N$ and let $\Delta t = \frac{1}{N}$.}
Define a random trajectory $\mathcal{T}$ with $N$ steps as follows,
\begin{equation}\label{eq:taudef}
\begin{aligned}
    \mathcal{T} = (X^{\theta,\sigma}_0,\tilde{U}^{\theta,\sigma}_0,X^{\theta,\sigma}_{\Delta t}, \tilde{U}^{\theta,\sigma}_{\Delta t},X^{\theta,\sigma}_{2\Delta t}, \tilde{U}^{\theta,\sigma}_{2\Delta t}, \cdots, X^{\theta,\sigma}_{1}),\notag
    \end{aligned}
\end{equation}
in which $X^{\theta,\sigma}_{0} \sim q_0$.
{We note the following Proposition, which is proved in the Appendix~\ref{app:proof}.}

\begin{proposition}
\label{prop:policy_grad} Let $X_1(\mathcal{T})$ denote the generated sample of trajectory $\mathcal{T}$, i.e., $X_1^{\theta,\sigma}$. Under assumptions of all $X_t^{\theta,\sigma}$ having strictly positive density on $\mathbb{R}^d$, $\mathcal{C}$ being a subset of $\mathbb{R}^d$ with positive Lebesgue measure, existence and uniqueness of the solutions driven by the respective ODEs, boundedness of $\nabla_{{\theta},\sigma}\log \pi_{{\theta},\sigma}(\tilde{U}^{\theta,\sigma}_{i\Delta t} | X^{\theta,\sigma}_{i\Delta t})$, and finiteness of all the expectations involved:
\begin{equation*}
\begin{aligned}
\nabla_{{\theta},\sigma}\mathbb{E}\left[\mathbf{1}_{\mathcal{C}}(X^{\theta,\sigma}_1)\right]= \mathbb{E}_{\mathcal{T}}\left[\sum_{i=0}^{N-1}\left(\nabla_{{\theta},\sigma}\log \pi_{{\theta},\sigma}(\tilde{U}^{\theta,\sigma}_{i\Delta t} | X^{\theta,\sigma}_{i\Delta t})\right)\mathbf{1}_{\mathcal{C}}(X_1(\mathcal{T}))\right]. 
\end{aligned}\end{equation*}
\end{proposition}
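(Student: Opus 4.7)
My proof plan is to establish the identity via the classical log-derivative (REINFORCE) argument, adapted to the randomized Euler-discretized flow. The key observation is that, although the trajectory $\mathcal{T}$ alternates between states and velocities, every state update $X^{\theta,\sigma}_{(i+1)\Delta t} = X^{\theta,\sigma}_{i\Delta t} + \tilde{U}^{\theta,\sigma}_{i\Delta t}\,\Delta t$ is deterministic and itself free of any $(\theta,\sigma)$ dependence. I would therefore reparametrize the trajectory by $(X_0, \tilde{U}_{0}, \tilde{U}_{\Delta t}, \ldots, \tilde{U}_{(N-1)\Delta t})$, absorbing the states as deterministic, parameter-free functions of these primitives, and write the joint density as
\[
p_{\theta,\sigma}(x_0, \tilde{u}_0, \ldots, \tilde{u}_{(N-1)\Delta t}) \;=\; q_0(x_0)\prod_{i=0}^{N-1}\pi_{\theta,\sigma}\!\left(\tilde{u}_{i\Delta t}\,\big|\, x_{i\Delta t}\right),
\]
where each $x_{i\Delta t}$ is the deterministic image of $(x_0, \tilde{u}_{0}, \ldots, \tilde{u}_{(i-1)\Delta t})$ under the Euler map, and similarly $x_1$ is a deterministic function of the full primitive tuple.

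Given this factorization, I would express
\[
\mathbb{E}\!\left[\mathbf{1}_{\mathcal{C}}(X^{\theta,\sigma}_1)\right] \;=\; \int \mathbf{1}_{\mathcal{C}}(x_1)\,p_{\theta,\sigma}(x_0, \tilde{u}_{0}, \ldots, \tilde{u}_{(N-1)\Delta t})\,dx_0\prod_{i=0}^{N-1}d\tilde{u}_{i\Delta t},
\]
differentiate under the integral sign in $(\theta,\sigma)$, and apply the identity $\nabla p = p\,\nabla\log p$, which is pointwise well-defined by the strict positivity of the trajectory densities. The log factors decompose as $\log q_0(x_0) + \sum_{i=0}^{N-1}\log\pi_{\theta,\sigma}(\tilde{u}_{i\Delta t}\mid x_{i\Delta t})$; the first term vanishes under $\nabla_{\theta,\sigma}$, and because $x_{i\Delta t}$ is parameter-free in this coordinate system, each remaining score is the partial gradient of the Gaussian conditional and nothing more. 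Reassembling, the integral returns to an expectation in the trajectory law and yields the claimed policy-gradient identity.

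The main obstacle is justifying the exchange of gradient and integral, which cannot be routed through the reparameterization trick because $\mathbf{1}_{\mathcal{C}}$ is discontinuous on $\partial\mathcal{C}$; the REINFORCE route precisely sidesteps this by shifting the gradient onto the smooth, strictly positive density. To legitimize differentiation under the integral, I would invoke dominated convergence, combining the uniform bound $\mathbf{1}_{\mathcal{C}}\le 1$ with the stated boundedness of the log-density gradients and the stated finiteness of the relevant expectations; these together furnish an $L^1$ dominator for $\mathbf{1}_{\mathcal{C}}(x_1)\,p_{\theta,\sigma}\,\nabla_{\theta,\sigma}\log p_{\theta,\sigma}$. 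The remaining hypotheses---existence and uniqueness of the flow, and positive Lebesgue measure of $\mathcal{C}$---ensure that the underlying objects are well-posed and that the target expectation is genuinely a non-degenerate function of $(\theta,\sigma)$, completing the argument.
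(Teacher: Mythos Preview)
Your proposal is correct and follows essentially the same REINFORCE/log-derivative route as the paper: factor the trajectory density, differentiate under the integral via dominated convergence, apply $\nabla p = p\,\nabla\log p$, and drop the parameter-free terms. The only cosmetic difference is that you parametrize the trajectory by the primitives $(x_0,\tilde u_0,\ldots,\tilde u_{(N-1)\Delta t})$ and treat the states as deterministic images thereof, whereas the paper retains the states as integration variables with (degenerate) transition kernels $p(x_{(i+1)\Delta t}\mid x_{i\Delta t},\tilde u_{i\Delta t})$ that it then observes are $(\theta,\sigma)$-free; your coordinate choice is arguably cleaner, but the argument is the same.
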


We can approximate the expectation in Proposition \ref{prop:policy_grad} by sampling from $q_0$, simulating the trajectories with the forward Euler discretization of the ODE according to $\theta, \sigma$, and utilizing the closed-form expression for $\log \pi_{\sigma, \theta}$. 

There are two numerical issues in approximating the expectation above. First, $N$ is typically very large to control the error in forward Euler approximation $O(\Delta t)$ \citep{oderror}, which leads to an increased backpropagation cost. Second, with a large $N$ one also has to randomize over a larger number of random variables, thus requiring more samples for estimation and a longer convergence time overall. 

One way to mitigate the second issue is to introduce randomness only for $t\geq t_0$, i.e., $\sigma_t=0$ for $t<t_0$ and $\sigma_t>0$ for $t\geq t_0$, in which $t_0\in (0,1)$ is a hyperparameter. Assuming that sampling a trajectory from $t=0$ to $t=1$ requires $N$ steps and there are $N_1$ steps before $t_0$ and $N_2$ steps after $t_0$, then we have $N_1+N_2=N$. The realization of a stochastic trajectory starts at $t=t_0$ and becomes $\tau = (x^{\theta,\sigma}_{t_0},\tilde{u}^{\theta,\sigma}_{t_0},x^{\theta,\sigma}_{t_0+\Delta t}, \tilde{u}^{\theta,\sigma}_{t_0+\Delta t},\cdots, x^{\theta,\sigma}_{1}) $ with $N_2$ steps. The first $N_1$ steps are deterministic conditioned on $x^{\theta,\sigma}_{0}\sim q_0$, i.e., $ x^{\theta}_{t_0}=x^{\theta,\sigma}_{t_0} =x^{\theta,\sigma}_{0}+\Delta t\sum_{i=0}^{N_1-1}{u}_{\theta}(x^{\theta,\sigma}_{i\Delta t}, i\Delta t)$. 

\begin{minipage}[t]{1\textwidth}
\begin{minipage}[t]{0.59\textwidth}
\begin{algorithm}[H] \setstretch{0.8} 
\caption{FM-RE training }
\label{alg:FM-REalg}
 \begin{algorithmic}
  \State \textbf{Input:} 
 $u_{\theta_1}$,  $u_{\theta_2},\sigma$, $t_0$, $\Delta t$, $N_2$, $\mathbf{1}_\mathcal{C}(\cdot)$, $q_1$, learning rate $\eta_1,\eta_2$, batch size $B_1,B_2$
 \State \textbf{Output:} $u_{\theta_1}$,  $u_{\theta_2},\sigma$
 \State ~
\Repeat
\State $\mathcal{L}_1\gets 0$
\For{$b_1 = 1,2,\cdots, B_1$}
\State $x_0\sim\mathcal{N}(\bm{0},\bm{I})$, $x_1\sim q_1, t\sim \mathcal{U}[0,1]$
\State $\psi_t(x_0,x_1) \gets (1 - t)x_0 + tx_1$
\State $\mathcal{L}_1\gets\mathcal{L}_1+ \|u_{\theta_1}(\psi_t(x_0,x_1), t)-(x_1-x_0)\|^2$
\EndFor
\State ${\theta_1}\gets {\theta_1} - \eta_1\nabla_{\theta_1}\mathcal{L}_1$
\Until converged
 \State {${\theta_2}\gets {\theta_1}$}
\Repeat
\State $\mathcal{L}_2\gets 0$
\For{$b_2 = 1,2,\cdots, B_2$}
\State Obtain $x_1(\tau)=x_{1}^{\theta_2,\sigma}$ via Alg.~\ref{alg:FM-RE-sampling} (randomized)
\State \resizebox{0.81\textwidth}{!}{$\mathcal{L}_{\mathcal{C}} \gets - \sum_{i=0}^{N_2-1}\left(\log \pi_{{\theta_2},\sigma}(\tilde{u}^{\theta_2,\sigma}_{t_0+i\Delta t} | x^{\theta_2,\sigma}_{t_0+i\Delta t})\right)\mathbf{1}_{\mathcal{C}}(x_1(\tau))$}
\State $x_0\sim\mathcal{N}(\bm{0},\bm{I})$, $x_1\sim q_1$
\State $i\sim \text{Uniform}( [0: N_2-1])$, ${t}\gets t_0+i\Delta t$
\State $\psi_{t}(x_0,x_1) \gets (1 - {t})x_0 + {t}x_1$
\State \resizebox{0.6\textwidth}{!}{$\tilde{u}^{\theta_2,\sigma}_{t} \sim \mathcal{N}(u_{\theta_2}(\psi_{t}(x_0,x_1),t),\sigma_t^2\bm{I})$}
\State $\mathcal{L}_2\gets\mathcal{L}_2+ \|\tilde{u}^{\theta_2,\sigma}_{t}-(x_1-x_0)\|^2+\lambda\mathcal{L}_\mathcal{C}$
\EndFor
\State $\{{\theta_2},\sigma\}\gets \{{\theta_2},\sigma\} -\eta_2 
\nabla_{\theta_2,\sigma} \mathcal{L}_2$
\Until converged
\State \textbf{Return} $u_{\theta_1}$,  $u_{\theta_2},\sigma$
\end{algorithmic}   
\end{algorithm}
\end{minipage}
\hfill
\begin{minipage}[t]{0.38\textwidth}
\begin{algorithm}[H] \setstretch{0.8} 
\caption{FM-RE sampling }
\label{alg:FM-RE-sampling}
 \begin{algorithmic}
  \State \textbf{Input:} 
$u_{\theta_1}$,  $u_{\theta_2}, \sigma$, $t_0$,  $\Delta t$
 \State \textbf{Output:} $x_1$
\State ~
 \State  $x_0 \sim \mathcal{N}(\bm{0}, \bm{I})$, $t\gets0$
\Repeat
\State $x_{t+\Delta t} \gets x_{t}+ u_{\theta_1}(x_{t},t)\Delta t$
\State $t\gets t+\Delta t$
\Until $t=t_0$
\Repeat
\If{randomized}
\State {$\tilde{u}_{t} \sim \mathcal{N}(u_{\theta_2}(x_{t},t),\sigma_{t}^2\bm{I})$} 
\Else
\State $\tilde{u}_{t} \gets u_{\theta_2}(x_{t},t)$ 
\EndIf
\State $x_{t+\Delta t} \gets x_{t}+\tilde{u}_{t}\Delta t$
\State $t\gets t+\Delta t$
\Until $t=1$
\State \textbf{Return} $x_1$

\end{algorithmic}   

\end{algorithm}
\end{minipage}

\end{minipage}

\begin{proposition} 
\label{prop:policy_gradb}
{For randomization starting at time $t_0 > 0$,  under the assumptions of Proposition \ref{prop:policy_grad} we have:}
\begin{equation*}
\resizebox{1\textwidth}{!}{$
\nabla_{{\theta},\sigma}\mathbb{E}\left[\mathbf{1}_{\mathcal{C}}(X^{\theta,\sigma}_1)\right] = \mathbb{E}_{\mathcal{T}}[\nabla_{{\theta},\sigma} \log p(X^{\theta}_{t_0})\mathbf{1}_{\mathcal{C}}(X_1(\mathcal{T}))] + \mathbb{E}_{\mathcal{T}}\left[\sum_{i=0}^{N_2-1}\left(\nabla_{{\theta},\sigma}\log \pi_{{\theta},\sigma}(\tilde{U}^{\theta,\sigma}_{t_0+i\Delta t} | X^{\theta,\sigma}_{t_0+i\Delta t})\right)\mathbf{1}_{\mathcal{C}}(X_1(\mathcal{T}))\right].$ }
\end{equation*}

\end{proposition}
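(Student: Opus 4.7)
The plan is to mirror the REINFORCE-style derivation of Proposition \ref{prop:policy_grad}, but split the trajectory $\mathcal{T}$ into a deterministic prefix (the Euler flow under $u_{\theta}$ from $t=0$ to $t=t_0$) and a randomized suffix (the $N_2$ Gaussian-perturbed Euler steps from $t_0$ to $1$). Concretely, I factor the joint density of $\mathcal{T}$ as
\begin{equation*}
p_{\theta,\sigma}(\mathcal{T}) \;=\; p(x_{t_0}^{\theta}) \prod_{i=0}^{N_2-1} \pi_{\theta,\sigma}\!\left(\tilde{u}^{\theta,\sigma}_{t_0+i\Delta t} \,\big|\, x^{\theta,\sigma}_{t_0+i\Delta t}\right),
\end{equation*}
where $p(x_{t_0}^{\theta})$ is the pushforward of $q_0$ through the deterministic prefix map $x_0 \mapsto x_0 + \Delta t \sum_{i=0}^{N_1-1} u_{\theta}(x_{i\Delta t}^{\theta}, i\Delta t)$. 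Under the positive-density and unique-solution assumptions of Proposition \ref{prop:policy_grad}, this prefix map is a $\theta$-differentiable diffeomorphism, so $p(x_{t_0}^\theta)$ is well-defined and differentiable in $\theta$; its dependence on $\sigma$ is trivial.

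Next, I would write the objective as $\mathbb{E}[\mathbf{1}_{\mathcal{C}}(X_1^{\theta,\sigma})] = \int \mathbf{1}_{\mathcal{C}}(x_1^{\theta,\sigma})\, p_{\theta,\sigma}(\mathcal{T})\, d\mathcal{T}$ and apply the standard log-derivative identity
\begin{equation*}
\nabla_{\theta,\sigma}\, p_{\theta,\sigma}(\mathcal{T}) \;=\; p_{\theta,\sigma}(\mathcal{T})\, \nabla_{\theta,\sigma} \log p_{\theta,\sigma}(\mathcal{T}).
\end{equation*}
The interchange of $\nabla_{\theta,\sigma}$ and $\int$ is justified by dominated convergence: $\mathbf{1}_{\mathcal{C}}$ is bounded, the log-density gradients are assumed bounded, and all expectations are assumed finite, so the integrand $\mathbf{1}_{\mathcal{C}} \cdot p_{\theta,\sigma} \cdot \nabla_{\theta,\sigma}\log p_{\theta,\sigma}$ is dominated by an integrable envelope.

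Finally, taking logs of the factored density gives
\begin{equation*}
\log p_{\theta,\sigma}(\mathcal{T}) \;=\; \log p(X_{t_0}^{\theta}) \;+\; \sum_{i=0}^{N_2-1} \log \pi_{\theta,\sigma}\!\left(\tilde{U}^{\theta,\sigma}_{t_0+i\Delta t}\,\big|\,X^{\theta,\sigma}_{t_0+i\Delta t}\right),
\end{equation*}
and linearity of expectation yields the two-term decomposition in the statement. The first term collapses to a single marginal log-density contribution because randomness is absent on $[0,t_0]$, so the entire prefix only affects the gradient through the $\theta$-dependent pushforward $p(X_{t_0}^\theta)$; the second term is the usual REINFORCE-style sum over the $N_2$ randomized Euler steps, exactly as in Proposition \ref{prop:policy_grad} but restricted to indices $i = 0, \dots, N_2 - 1$.

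The main obstacle is cleanly justifying the prefix term $\nabla_{\theta,\sigma}\log p(X_{t_0}^\theta)$: one must argue that the deterministic Euler map is invertible with a $\theta$-smooth inverse so that a change-of-variables formula produces a density differentiable in $\theta$, and that its log-gradient is integrable against $\mathbf{1}_{\mathcal{C}}(X_1^{\theta,\sigma})$. This follows from the uniqueness of ODE solutions and the bounded-log-gradient hypothesis, but it is the step where Proposition \ref{prop:policy_gradb} genuinely diverges from the purely stochastic case of Proposition \ref{prop:policy_grad} and so deserves the most careful verification.
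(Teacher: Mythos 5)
Your proposal is correct and follows essentially the same route as the paper: factor the trajectory density from $t_0$ onward with the prefix contributing only through the $\theta$-dependent marginal $p(X^{\theta}_{t_0})$, apply the log-derivative trick with a dominated-convergence justification for exchanging gradient and integral, and observe that the ($\theta,\sigma$)-independent transition kernels drop out, leaving the two-term decomposition. Your added care about the differentiability of the pushforward density $p(X^{\theta}_{t_0})$ is a reasonable elaboration of a point the paper states without proof, but it does not change the argument.
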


The proof is provided in Appendix~\ref{proof42}. 
However, we are now faced with another computational challenge, namely, to approximate $\mathbb{E}_{\mathcal{T}}[\nabla_{{\theta},\sigma} \log p(X^{\theta}_{t_0})\mathbf{1}_{\mathcal{C}}(X_1(\mathcal{T}))]$, where it is evident that there is no access to an analytical form for $\log p(X^{\theta}_{t_0})$. This can potentially be done by using non-parametric score estimators \citep{zhou2020nonparametric} by generating a lot of samples or via computing the normalized probability flow corresponding to the compositions of the transformations corresponding to the forward Euler discretization (if it can be guaranteed that the transformations are invertible) \citep{papamakarios2021normalizing}. Nevertheless, both approaches come with their own significant computational burden.

Instead, we adopt two sets of parameters $\theta_1$ and $(\theta_2,\sigma)$ for $t<t_0$ and $t\geq t_0$ respectively. The velocity $u_{\theta_1}(X,t)$ for $t<t_0$ is deterministic. The velocity $\tilde{U}^{\theta_2,\sigma}_t$ for $t\geq t_0$ becomes stochastic following \eqref{eq:randomv}.
First $\theta_1$ is optimized with the FM objective via
\begin{equation}\label{eq:FM-REobj-1}
    \arg \min_{\theta_1} \int_{0}^{t_0} \mathbb{E}[\| u_{\theta_1}(X_t, t) - \frac{d}{dt}\Psi_t(X_0, X_1)\|^2] dt.
\end{equation}
Note that $\nabla_{{\theta_2},\sigma} \log p(X^{\theta_1}_{t_0})=0$. Then $\theta_1$ is frozen and $(\theta_2,\sigma)$ is optimized via
\begin{equation}
\begin{aligned}
\label{eq:FM-REobj-2}
   \arg \min_{\theta_2,\sigma} \Big\{\int_{t_0}^{1} \mathbb{E}[\| \tilde{U}^{\theta_2,\sigma}_t - \frac{d}{dt}\Psi_t(X_0, X_1)\|^2] dt-\lambda \mathbb{E}\left[\mathbf{1}_{\mathcal{C}}(X^{\theta_2,\sigma}_1)\right]\Big\},
       \end{aligned}
\end{equation}\\
in which $\nabla_{{\theta_2},\sigma}\mathbb{E}\left[\mathbf{1}_{\mathcal{C}}(X^{\theta_2,\sigma}_1)\right]$ is given as

\begin{equation}
\begin{aligned}
\mathbb{E}_\mathcal{T}\left[\sum_{i=0}^{N_2-1}\left(\nabla_{{\theta_2},\sigma}\log \pi_{{\theta_2},\sigma}(\tilde{U}^{\theta_2,\sigma}_{t_0+i\Delta t} | X^{\theta_2,\sigma}_{t_0+i\Delta t})\right)\mathbf{1}_{\mathcal{C}}(X_1(\mathcal{T}))\right],\notag
    \end{aligned}
\end{equation}
%
where $\mathcal{T}$ starts at $X^{\theta_2,\sigma}_{t_0} =X^{\theta_1}_{t_0} $ and has $N_2$ steps. 
The training algorithm is given in Alg.~\ref{alg:FM-REalg}. 
Recall \eqref{eq:jensen}, we can regard $\mathbb{E}[\tilde{U}_t^{\theta_2,\sigma}] =  u_{\theta_2}(X, t)$ as a deterministic mean velocity to remove randomness when sampling, as shown in Alg.~\ref{alg:FM-RE-sampling}. We emphasize that FM-RE's two-stage procedure is not a fine-tuning algorithm \citep{ftdiffusion}, instead, the first stage trains for $t<t_0$ following FM objective, and the second stage for $t\geq t_0$
is a joint optimization framework, where both distributional similarity and constraint satisfaction are optimized under a unified objective. We further contrast this distinction and show that FM-RE achieves better trade-off in distribution matching vs. constraint satisfaction in Appendix~\ref{sec:2dexampleappendix}.

\section{Related Works}
\label{sec:related_work}
\begin{table*}[h]
\centering
\scalebox{0.82}{
\begin{tabular}{|c|c|c|c|c|c|c|c|c|}
\hline
 & DDPM, FM  & RDM  & RFM & MDM & NAMM & PDM&{FM-DD} & {FM-RE} \\ \hline
$\mathbb{P}(X_1\notin \mathcal{C})$ & $p_0$ & $0$ & $0$ & $0$ & $0<p \ll p_0$ & $0$& $0<p \ll p_0$ & $0<p \ll p_0$ \\ \hline
\begin{tabular}[c]{@{}c@{}}Requirements\\ on $\mathcal{C}$\end{tabular} & N/A & \begin{tabular}[c]{@{}c@{}}Convex,\\ \begin{tabular}[c]{@{}c@{}}explicit\\ knowledge\\ of $\partial\mathcal{C}$\end{tabular}\end{tabular} & \begin{tabular}[c]{@{}c@{}}Connected,\\ \begin{tabular}[c]{@{}c@{}}explicit\\ knowledge\\ of $\partial\mathcal{C}$\end{tabular} \end{tabular} & \begin{tabular}[c]{@{}c@{}}Convex,\\ closed-form\\ bijective\\ projection\end{tabular} & \begin{tabular}[c]{@{}c@{}}$d(\cdot,\mathcal{C}),$\\learnable\\ bijective\\ projection\end{tabular} & \begin{tabular}[c]{@{}c@{}}Projection\\ to $\mathcal{C}$ \end{tabular}& $d(\cdot,\mathcal{C})$ & \begin{tabular}[c]{@{}c@{}}Membership\\ (loosest)\end{tabular} \\ \hline
\end{tabular}}
\caption{A tabular summary of related works. $\mathbb{P}(X_1\notin \mathcal{C})$ represents the probability of constraint violations. Denoising diffusion probabilistic models (DDPM) and FM appear in \citet{NEURIPS2020_DDPM}
 and \citet{lipman2023flow}, respectively. Reflected flow matching (RFM) appears in \citet{reflectedfm}. Mirror diffusion model 
(MDM) appears in \citet{mirror}. Neural approximate mirror map (NAMM) appears in \citet{feng2025neural}. Projected diffusion model (PDM) appears in \citet{Christopher2024ConstrainedSW}. 
\textbf{This work}: FM-DD and FM-RE.}\label{tb:relatedworks}
\end{table*}

We compare several related works with our methods in Table~\ref{tb:relatedworks}. 
The recent works that handle constrained generation are primarily reflection-based methods, such as the reflected diffusion model (RDM) \citep{loureflect, fishmanreflect} and reflected flow matching (RFM) \citep{reflectedfm} that provide solutions to enforce strict constraint satisfaction by
designing paths such that $\Psi_t(x_0, x_1) \in \mathcal{C}\, \forall t \in [0,1]$. However, designing such paths requires knowledge of the normal to $\partial \mathcal{C}$. Mirror diffusion model 
(MDM) \citep{mirror} is another method to ensure that the generated samples satisfy the constraints. MDM 
constructs a bijective projection between the constraint set and an unconstrained domain. Learning is performed in the unconstrained domain, and samples are generated in the unconstrained domain before being mapped back to the constraint set via the inverse projection. However, such bijective projections in closed-form only exist for a subset of convex constraints. Neural approximate mirror map (NAMM) \citep{feng2025neural} aims to approximate the bijective projection in MDM via neural networks and expands the application domain to non-convex constraint sets. Nevertheless, this requires careful construction to ensure invertibility, especially when the training samples are limited and the constraint set is complex. Projected diffusion model (PDM) \citep{Christopher2024ConstrainedSW} is an inference-time method for constraint-aware generation, i.e., it does not modify the training process of diffusion models, instead, it enforces constraints at inference time by projecting stochastic diffusion updates onto $\mathcal{C}$ at each step. This projection may alter the resulting sampled distribution. When a distance $d(\cdot,\mathcal{C})$ is available, one can utilize a simple adaptation of the FM, i.e., FM-DD, to penalize outliers. Although FM-RE is not able to ensure zero constraint violation as in reflection-based methods and MDM, it can achieve a much higher constraint satisfaction rate than basic generative models while still matching the distribution. Moreover, among the mentioned works, FM-RE is able to handle simple as well as complex constraints requiring only a constraint satisfaction oracle.

\section{Experiments}

We refer the reader to Appendix~\ref{sec:expdetails} for a detailed description of all the experimental setups and parameter configurations for reproducibility. Only the mean velocity, i.e., $\mathbb{E}[\tilde{U}^{\theta_2,\sigma}_t ]=u_{\theta_2}(X,t)$ is employed in the sampling process of the second stage of FM-RE.

\subsection{Synthetic Experiments}\label{sec:exp_syn}
\begin{table}[htb]
\centering
\scalebox{0.8}{
\begin{tabular}{|l|l|l|l|l|l|l|}
\hline
 &  & Box & $2$ boxes & $8$d $\ell_2$ ball & $20$d $\ell_2$ ball & Subspace \\ \hline
\multirow{5}{*}{SWD ($\downarrow$)} & FM & $0.1268$ & $0.2260$ & $0.0193$ & $0.0087$ & $0.0372$ \\\cline{2-7} 
 & RFM & $0.1258$ & N/A & $0.0177$ & $0.0098$ & N/A\\ 
 \cline{2-7} 
 & MDM & $0.1431$ & N/A & $0.0292$ & $0.0159$ & N/A\\ 
 \cline{2-7} 
  & NAMM &$0.1680$&$0.2101$&$0.0228$&$0.0205$&$0.2124$\\ \cline{2-7} 
  & PDM &$0.1268$&$0.5661$&$0.0200$&$0.0257$&$0.3079$\\ \cline{2-7}
 & FM-DD & $0.1228$ & $0.2174$ & $0.0175$ & $0.0086$ & $0.0356$ \\ \cline{2-7} 
 & FM-RE & $0.1250$ & $0.2104$ & $0.0194$ & $0.0132$ & $0.0355$  \\ \hline
\multirow{5}{*}{\begin{tabular}[c]{@{}l@{}}$\mathbb{P}(X_1\notin \mathcal{C})$\\ (\textperthousand, $\downarrow$)\end{tabular}} & FM & $1.132$ & $4.580$ & $23.67$ & $90.82$ & $790.1$ \\\cline{2-7} 
 & RFM & $0$ & N/A & $0$ & $0$ & N/A \\ 
 \cline{2-7} 
 & MDM & $0$ & N/A & $0$ & $0$ & N/A \\\cline{2-7} 
 & NAMM &$0.387$&$0.379$&$0.138$&$2.890$&$707.6$\\ \cline{2-7} 
  & PDM &$0$&$0$&$0$&$0$&$0$\\ \cline{2-7}
 & FM-DD & $0.053$ & $0.073$ & $0.140$ & $0.502$ & $86.24$ \\ \cline{2-7} 
 & FM-RE & $0.066$ & $0.222$ & $0.768$ & $2.513$ & $98.58$ \\ \hline

\end{tabular}}
\caption{Performance comparison for synthetic experiments. Lower values indicate better performance for both metrics.
}\label{tb:synres}
\end{table}

\begin{figure*}[htb]
    \centering
    \begin{tabular}{ccccc}
     \rotatebox{90}{~~~~~~~~~~~~~~~Box}&\hspace{-14pt}
        \begin{subfigure}{0.21\textwidth}
            \includegraphics[width=\linewidth]{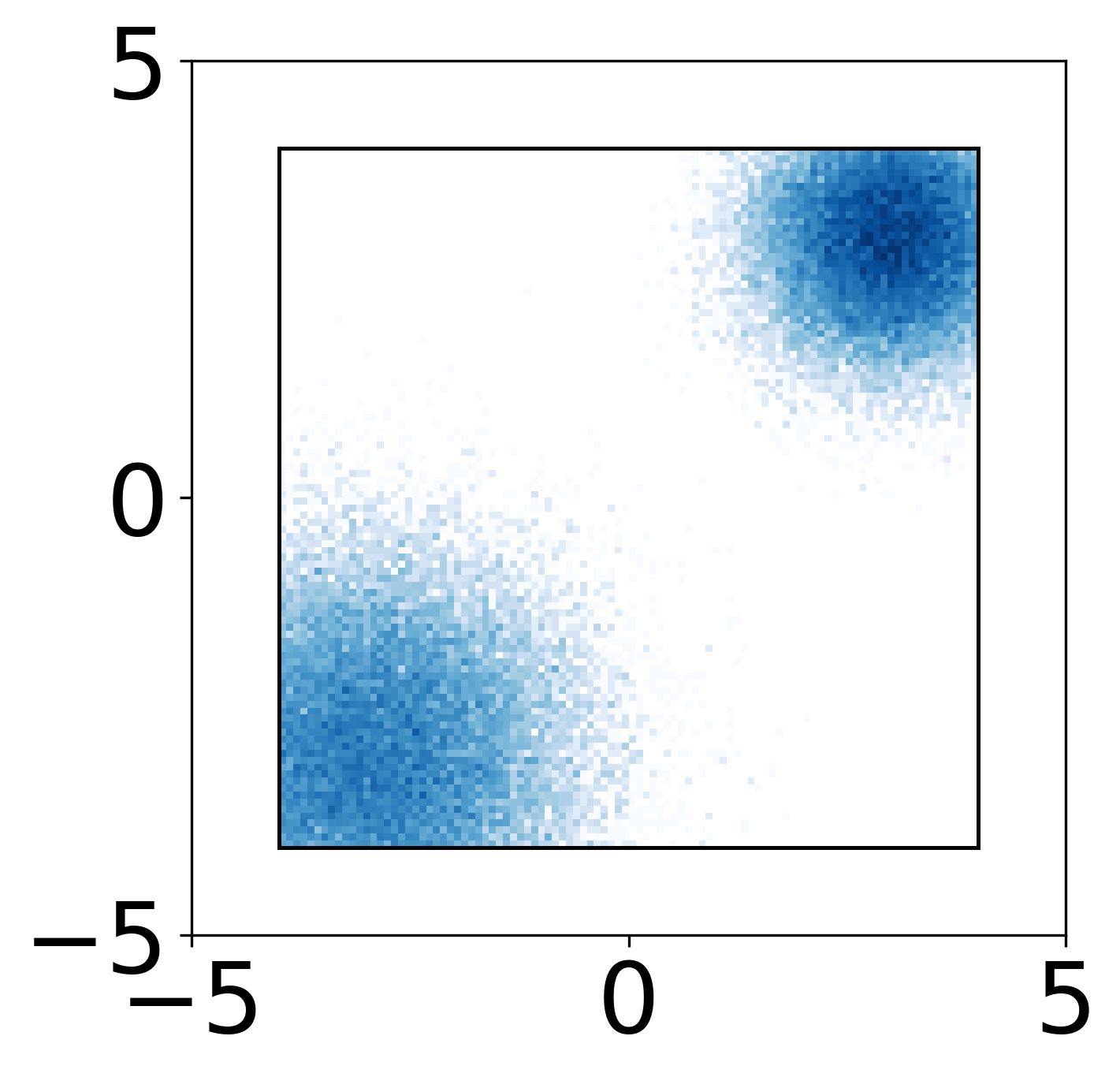}
        \end{subfigure} &\hspace{-15pt}
        \begin{subfigure}{0.21\textwidth}
            \includegraphics[width=\linewidth]{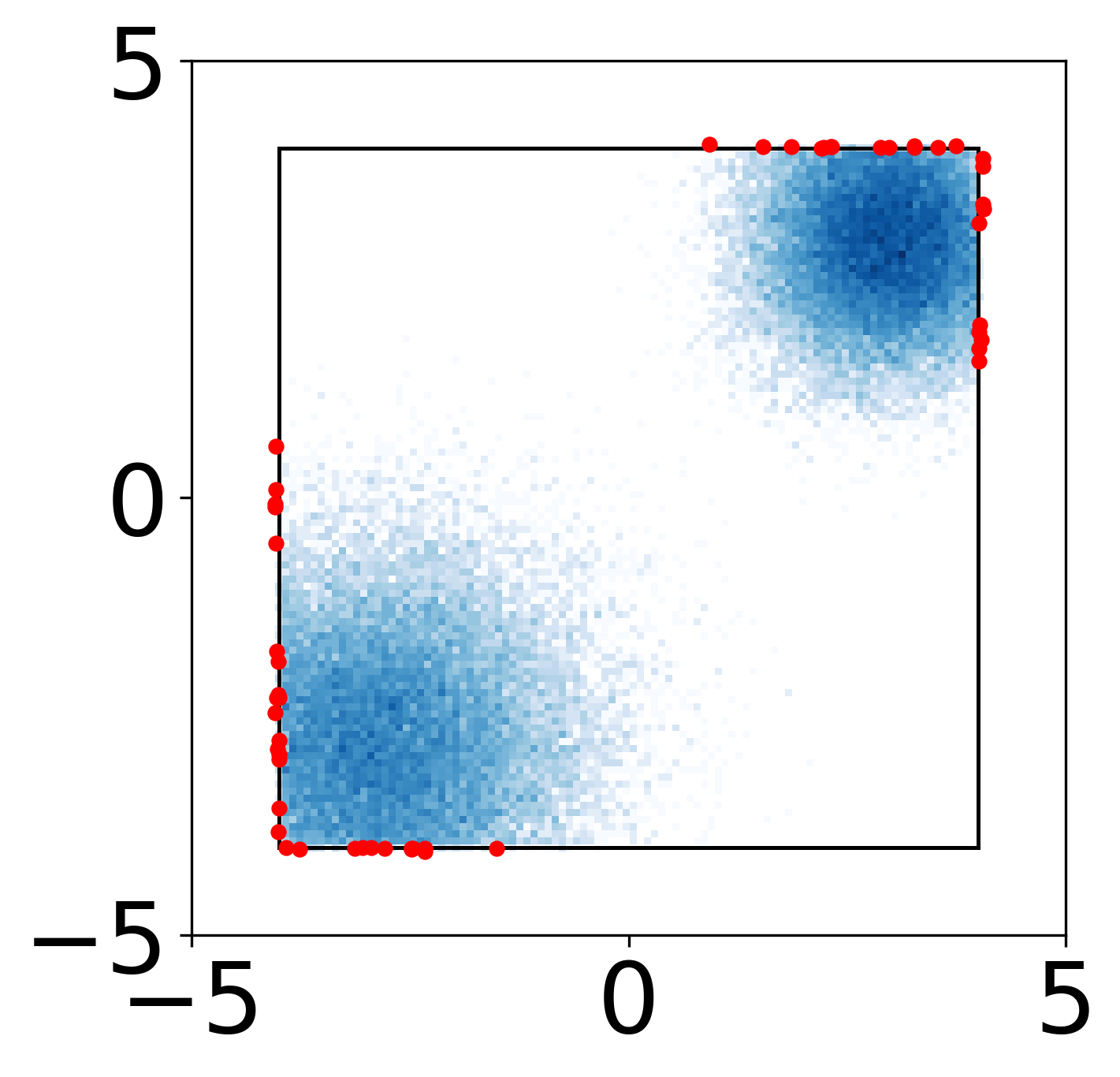}
        \end{subfigure} &\hspace{-15pt}
        \begin{subfigure}{0.21\textwidth}
            \includegraphics[width=\linewidth]{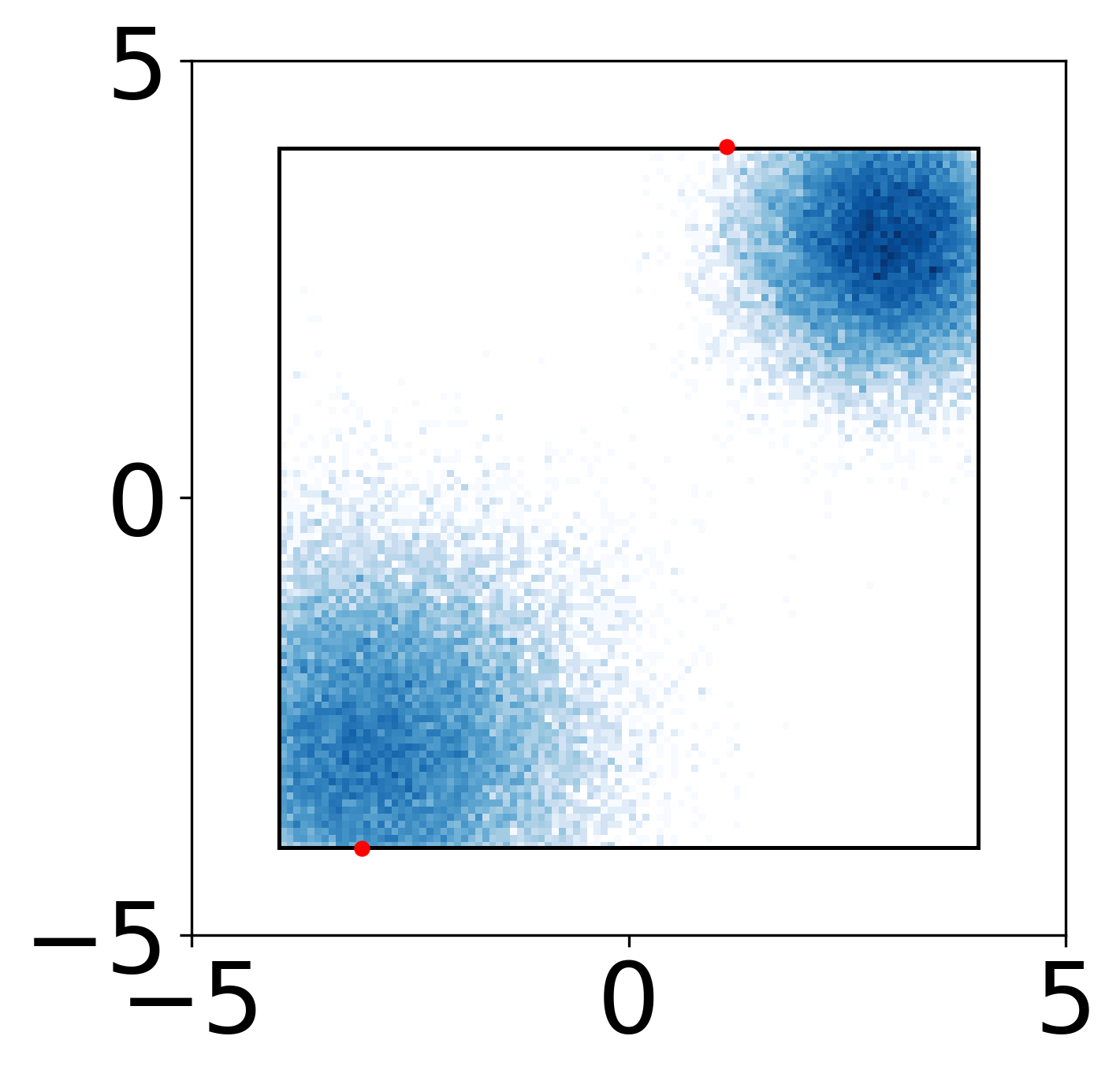}
        \end{subfigure} &\hspace{-15pt}
        \begin{subfigure}{0.21\textwidth}
            \includegraphics[width=\linewidth]{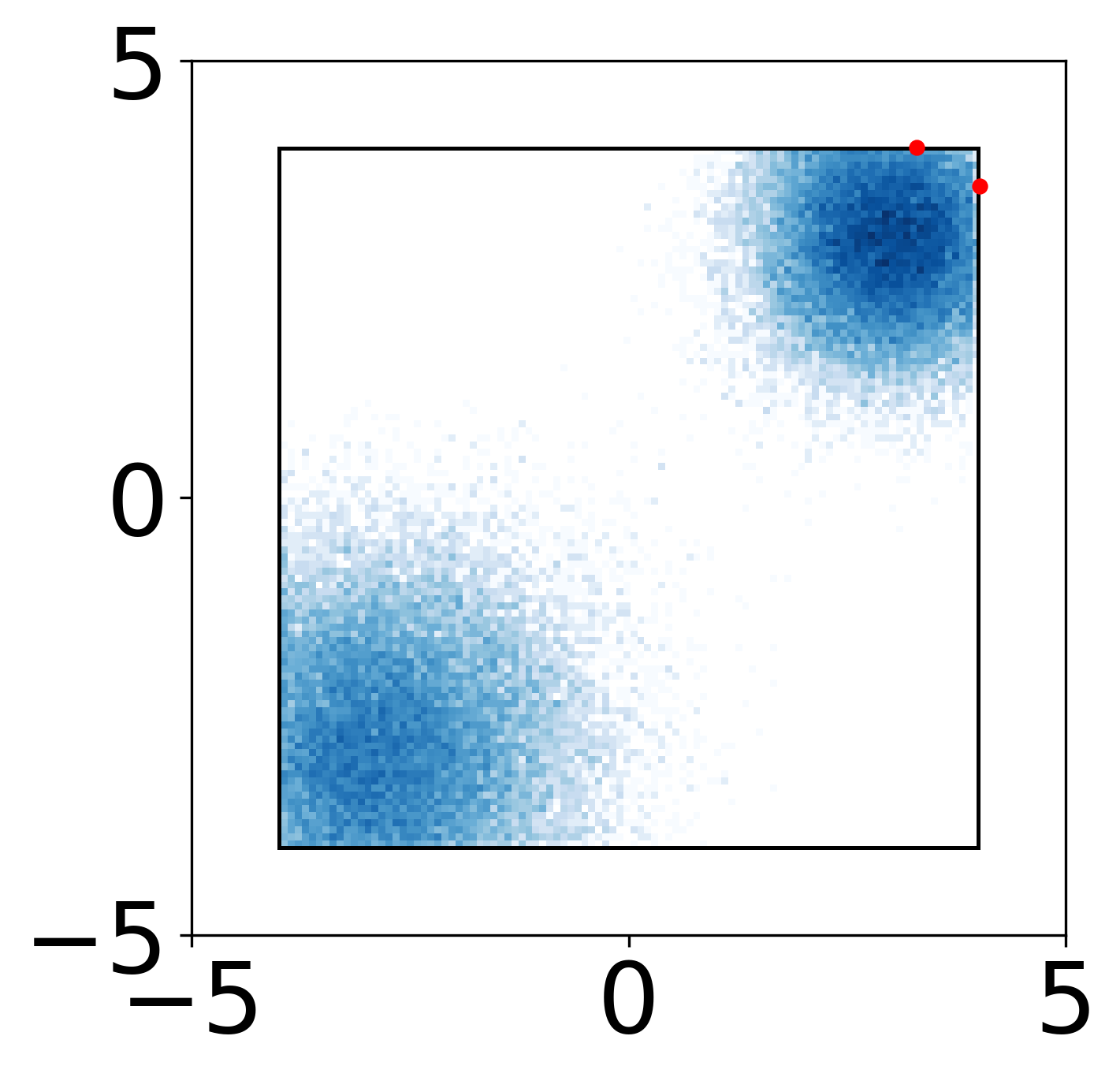}
        \end{subfigure} \\
        \rotatebox{90}{~~~~~~~~~~~~~~~~~~~$2$ boxes}&\hspace{-14pt}
        \begin{subfigure}{0.21\textwidth}
            \includegraphics[width=\linewidth]{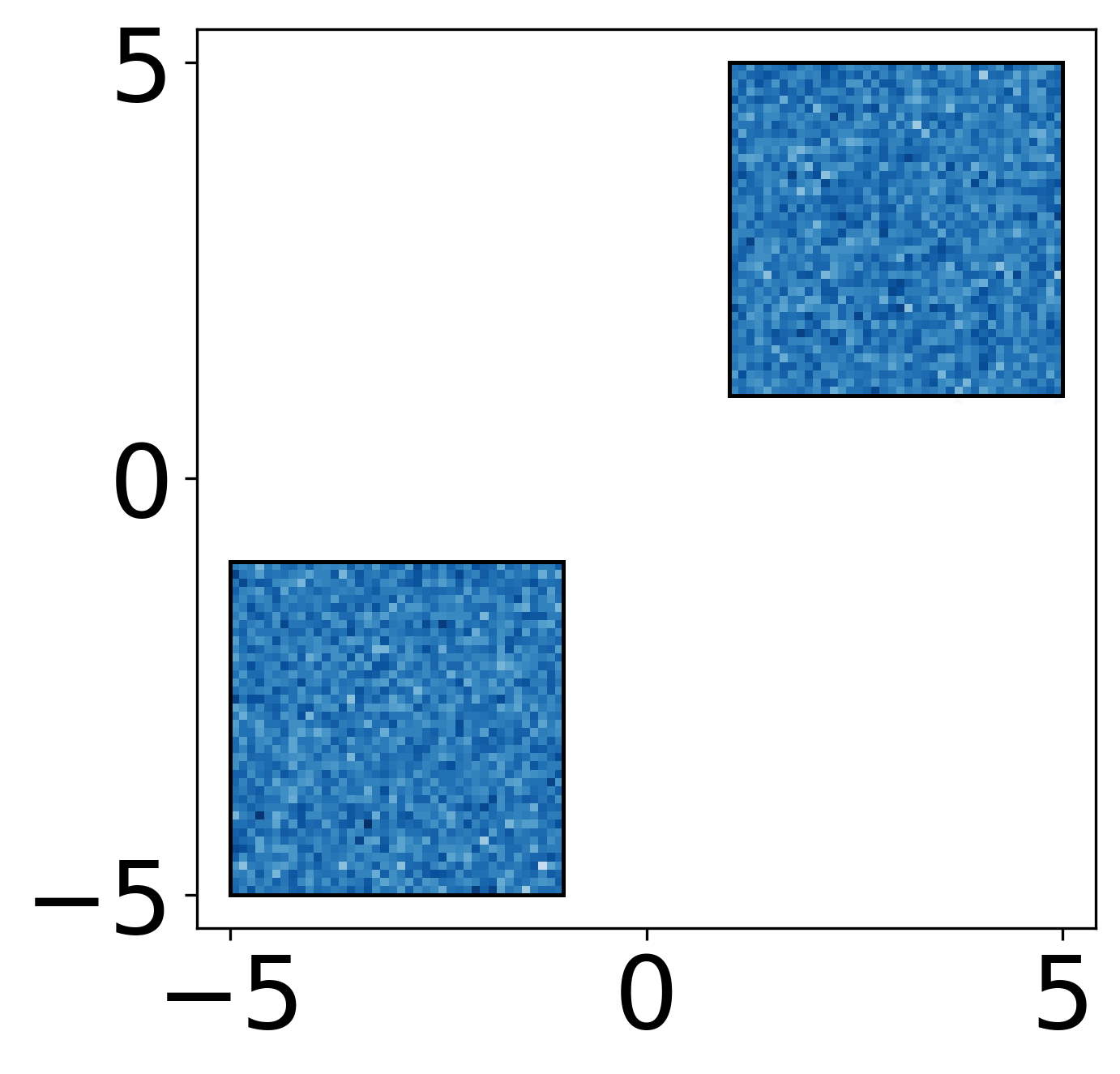}
            \caption{Training distribution}
        \end{subfigure} &\hspace{-15pt}
        \begin{subfigure}{0.21\textwidth}
            \includegraphics[width=\linewidth]{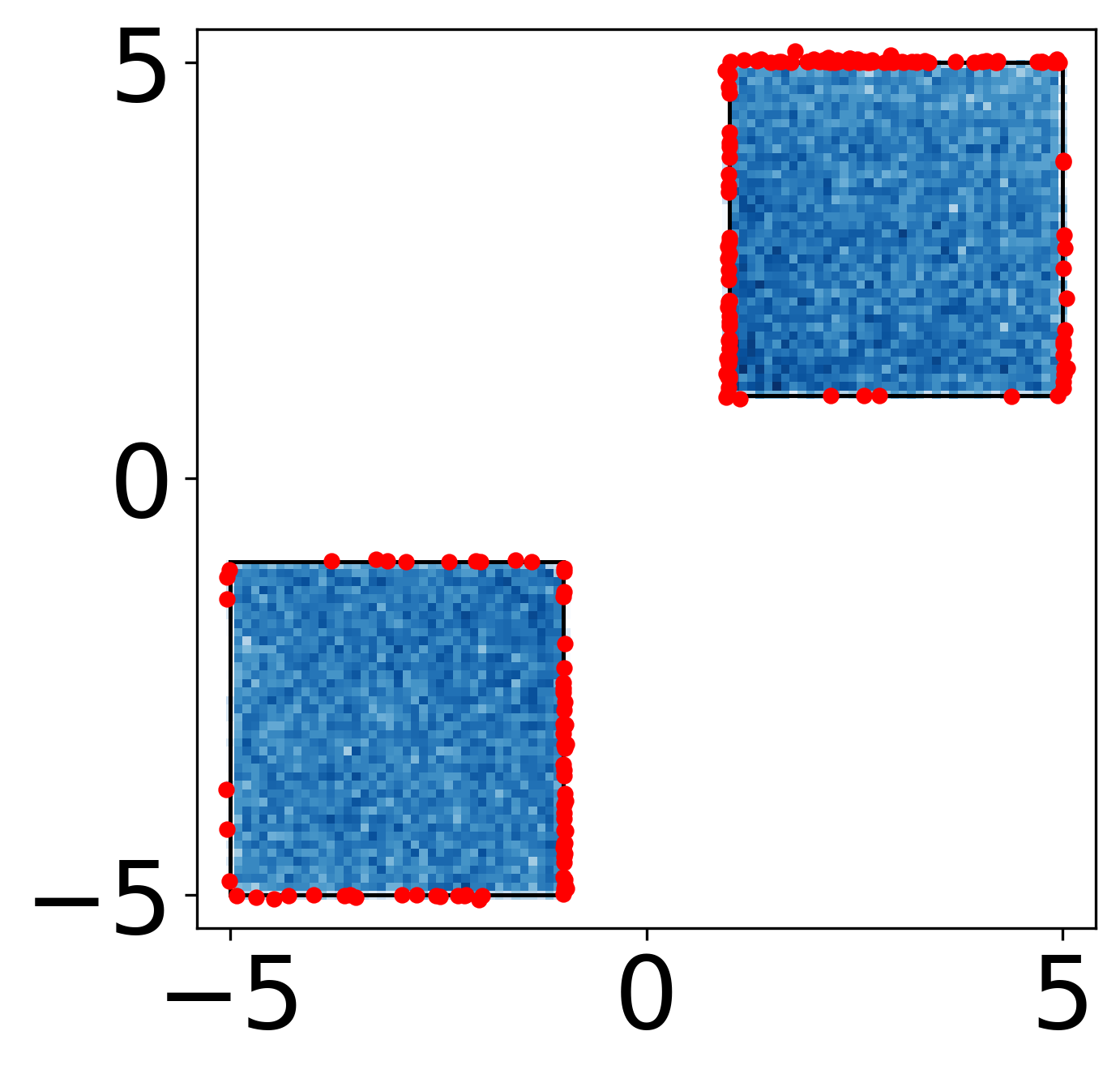}
            \caption{FM}
        \end{subfigure} &\hspace{-15pt}
        \begin{subfigure}{0.21\textwidth}
            \includegraphics[width=\linewidth]{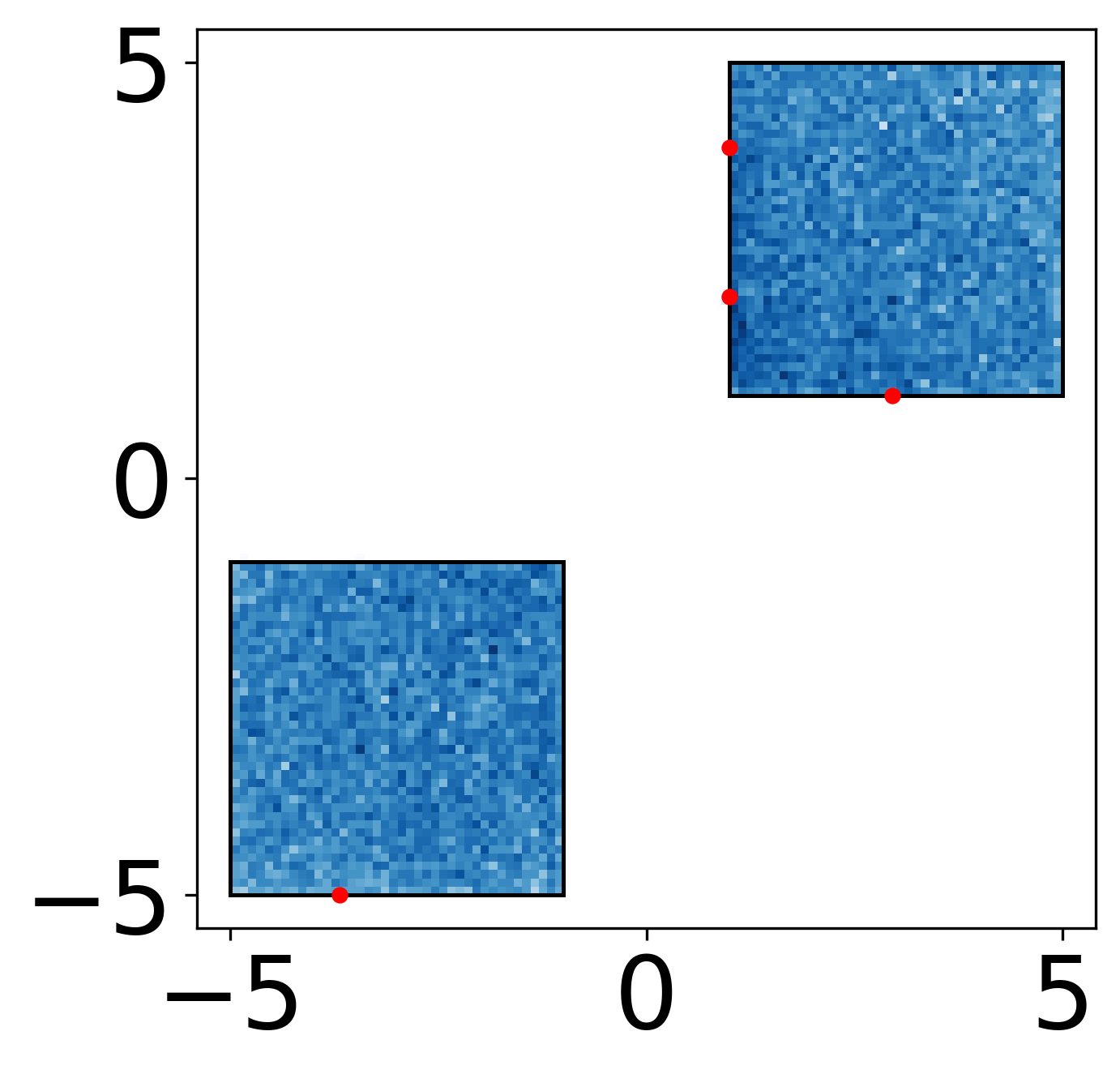}
            \caption{FM-DD}
        \end{subfigure} &\hspace{-15pt}
        \begin{subfigure}{0.21\textwidth}
            \includegraphics[width=\linewidth]{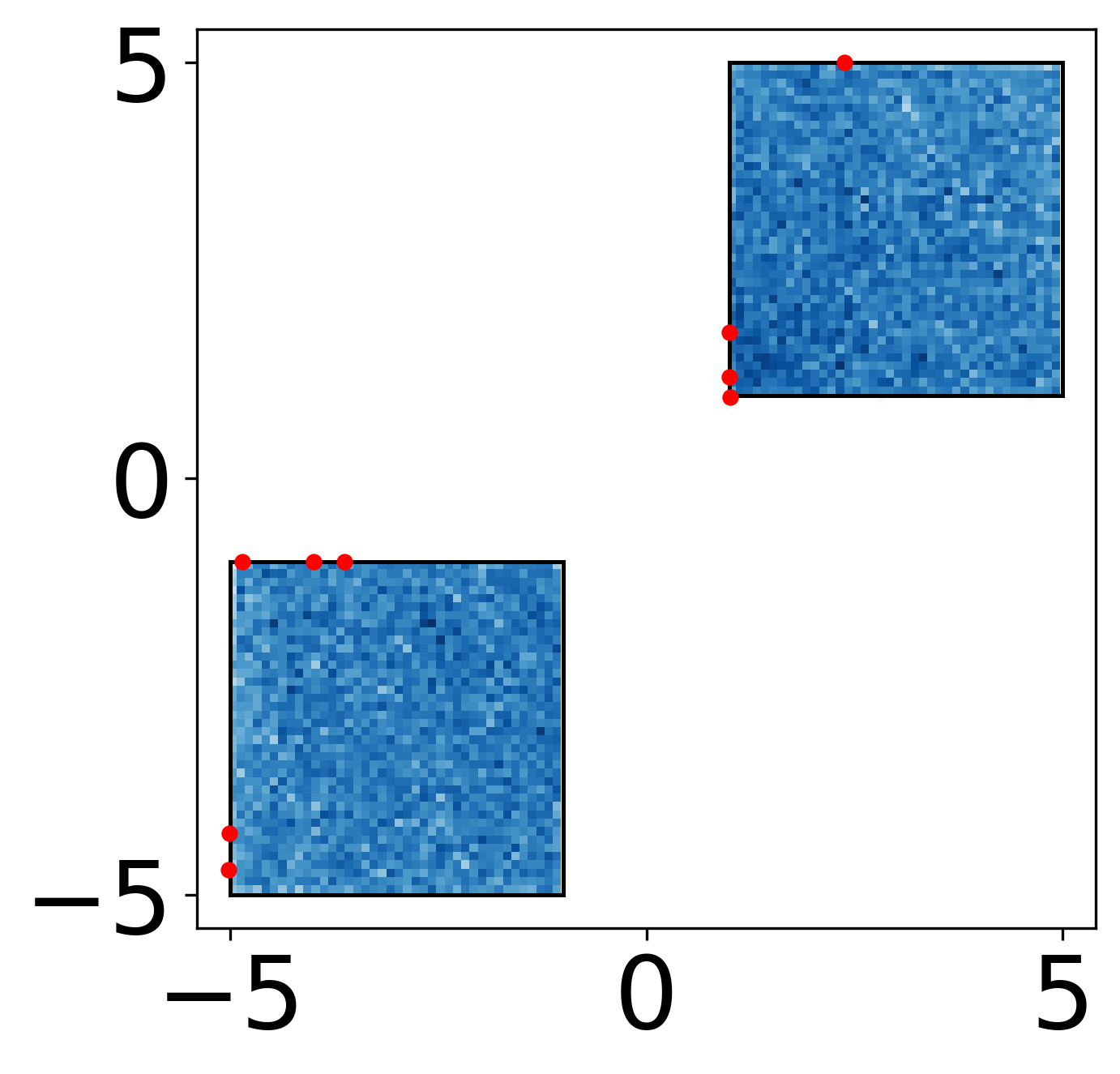}
            \caption{FM-RE}
        \end{subfigure}
    \end{tabular}
    \caption{The histplots of samples generated by different methods compared to samples in the training distribution. Samples violating the constraints are highlighted in red. The total sample size is $50000$.}\label{fig:2Dtoy}
\end{figure*}

For each synthetic case, the following approaches are compared: FM \citep{lipman2023flow}, RFM \citep{reflectedfm}, MDM \citep{mirror}, NAMM \citep{feng2025neural}, PDM \citep{Christopher2024ConstrainedSW}, FM-DD, and FM-RE. The details of the compared baselines are provided in Appendix~\ref{app:baselinedetails}. We report the constraint violation rate and the Sliced Wasserstein Distance (SWD) \citep{swd, Rabin} averaged over $100$ generation trials ($10^{4}$ samples each) in Table~\ref{tb:synres}. SWD measures the similarity between two distributions, with smaller values indicating greater similarity. N/A means RFM and MDM cannot apply to some of the cases. 

\paragraph{$\mathbf{2}$-D toy examples:}
We first test the effectiveness and visualize the generation performance of the proposed methods on $2$-D constraint sets, including a box constraint (\textbf{Box}) and a disconnected boxes constraint (\textbf{$\mathbf{2}$ boxes}). Notably, the \textbf{$\mathbf{2}$ boxes} constraint is non-convex and disconnected.

Fig.~\ref{fig:2Dtoy} shows the histplots of samples obtained from FM, FM-DD, and FM-RE. We can observe that all three methods can generate samples following the training distribution. However, FM-DD and FM-RE generate significantly fewer samples violating the constraints (denoted in red).  The stats in Table~\ref{tb:synres} show that, for \textbf{Box}, a convex constraint, all constraint-aware approaches achieve satisfactory performance. While for  \textbf{$\mathbf{2}$ boxes}, a disjoint constraint set, RFM and MDM are not applicable. PDM's sampled distribution deviates from the training distribution, leading to a large SWD metric. NAMM, FM-DD, and FM-RE generate samples following the training distribution, and satisfying the constraints with a higher probability compared with FM.

\paragraph{$\ell_2$ ball constraints:}
We next evaluate the proposed methods for higher-dimensional constraints, i.e., $8$-$d$ and $20$-$d$ $\ell_2$ ball constraints following \citet{mirror}. The target distribution is a Gaussian mixture. The results in Table~\ref{tb:synres} illustrate the superior constraint satisfaction performance of FM-DD and FM-RE over FM. 
In addition, for the $d=20$ case, we vary $t_0 \in \{0,0.2,0.4,0.6,0.8\}$, $\lambda \in \{2,5,10,20,30\}$, $N_2 \in \{75,60,45,30,15\}$, and $N_1 = 75-N_2$ to illustrate the relationship
  between distributional match and constraint satisfaction.
 The results are shown in Fig.~\ref{fig:d20compare}, in which we can observe that FM-DD achieves the best performance using $d(\cdot,\mathcal{C})$.
 FM-RE achieves a much better constraint satisfaction rate than FM via queries and explorations. Increasing $\lambda$ can further reduce constraint violations at the cost of distributional match. 
 FM-RE with different $t_0$ displays similar performances ; however,
a larger $t_0$ requires a smaller $N_2$, leading to lighter computational cost. The training time required to complete the same number of iterations for $t_0 \in \{0,0.2,0.4,0.6,0.8\}$ is approximately $27:23:19:15:11$, respectively. These results provide numerical justification that setting $t_0>0$ can reduce training time and computational cost without substantially compromising performance. 

\paragraph{Subspace constraint:}
Subspace constraint is a special type of constraint since it has no interior. Specifically, we consider a $10$-D multivariate Gaussian distribution's projection to a $9$-D hyperplane $\mathcal{C}$. The explicit mathematical expression of the hyperplane is unknown to the models, however, the distance of any sample to this hyperplane is available to them.  Note that it is not likely for the generated samples to fall exactly in the $9$-D hyperplane. For FM-RE, a sample is considered to satisfy the constraint if its distance to the hyperplane is smaller than a threshold $5\times 10^{-4}$. 

Table~\ref{tb:synres} shows that the FM model without constraint guidance usually generates samples with a further distance compared to the threshold. NAMM fails to learn an effective forward and backward function between the constraint set and the unconstrained space, leading to a deviation between the generated distribution and the training distributions. PDM also generates a deviated distribution due to the frequent projection steps. Both FM-DD and FM-RE can have much higher probabilities of generating samples close enough to the subspace while maintaining the distributional similarity with the training distribution. 
In addition to the stats in Table~\ref{tb:synres}, we report average distances between the generated samples and the subspace: $14.78\times 10^{-4}$ (FM), $2.08\times 10^{-4}$ (FM-DD), and $2.32\times 10^{-4}$ (FM-RE), which also illustrate that the proposed methods can generate samples closer to the subspace.

\begin{figure*}
    \begin{minipage}[t]{1\textwidth}
\begin{minipage}[b]{0.49\textwidth}

\begin{figure}[H]
\centering
\includegraphics[width=1\linewidth]{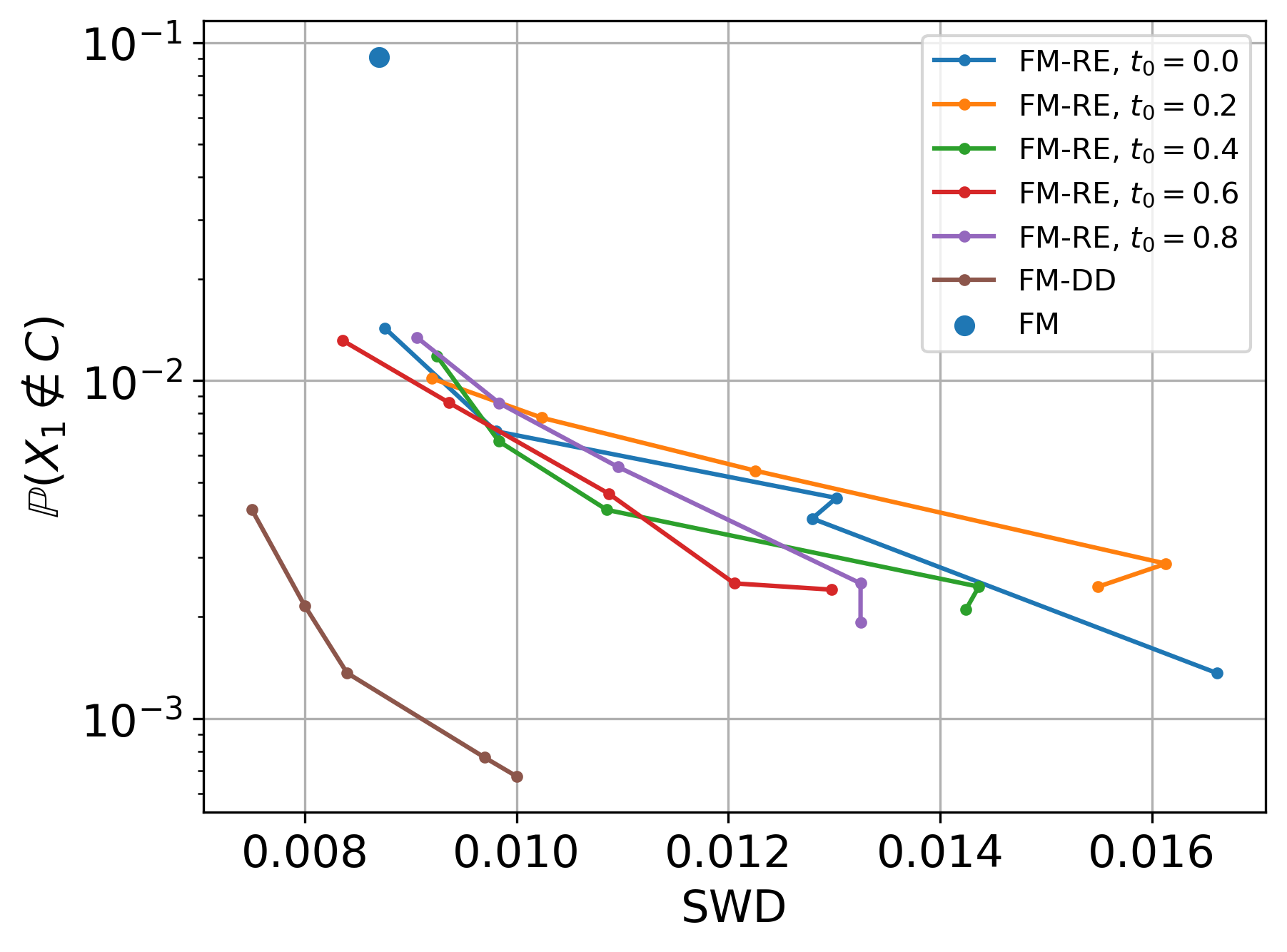}
    \caption{$\mathbb{P}(X_1\notin C)$ vs. SWD for FM-RE with different $t_0$ and FM-DD by sweeping $\lambda$. Larger $\lambda$ generally corresponds with rightward motion.
    }
    \label{fig:d20compare}
\end{figure}
\end{minipage}
\hfill
\begin{minipage}[b]{0.49\textwidth}
\centering
\begin{tabular}{|c|c|c|c|}
\hline
 &  & Brightness & Thickness \\ \hline
\multirow{2}{*}{FID ($\downarrow$)} & FM & $6.16$ & $6.19$ \\ \cline{2-4} 
 & PDM & $8.06$ & N/A \\ \cline{2-4} 
 & FM-RE & $5.86$ & $10.8$ \\ \hline
\multirow{2}{*}{\begin{tabular}[c]{@{}l@{}}$\mathbb{P}(X_1\notin \mathcal{C})$\\ ($\%, \downarrow$)\end{tabular}} & FM & $9.14$ & $23.2$ \\ \cline{2-4} 
& PDM & $0$ & N/A \\ \cline{2-4} 
 & FM-RE & $1.12$ & $5.40$ \\ \hline
\end{tabular} 
\captionof{table}{Performance comparison for MNIST digits generation with certain attributes. We report the average $\mathbb{P}(X_1\notin\mathcal{C})$ computed over $100$ generation trials ($1000$ samples each). The Fréchet Inception Distance (FID) \citep{fid} is computed based on a pre-trained LeNet-$5$ model between the training samples and $3\times 10^{4}$ generated samples. Lower values indicate better performance for both metrics.}\label{tb:mnist_attribute}
\end{minipage}
\end{minipage}
\end{figure*}

\begin{figure*}[thb]
    \centering
    \begin{tabular}{ccc}
     \rotatebox{90}{~~~~~~Brightness}&\hspace{-14pt}
        \begin{subfigure}{0.4\textwidth}
            \includegraphics[width=\linewidth]{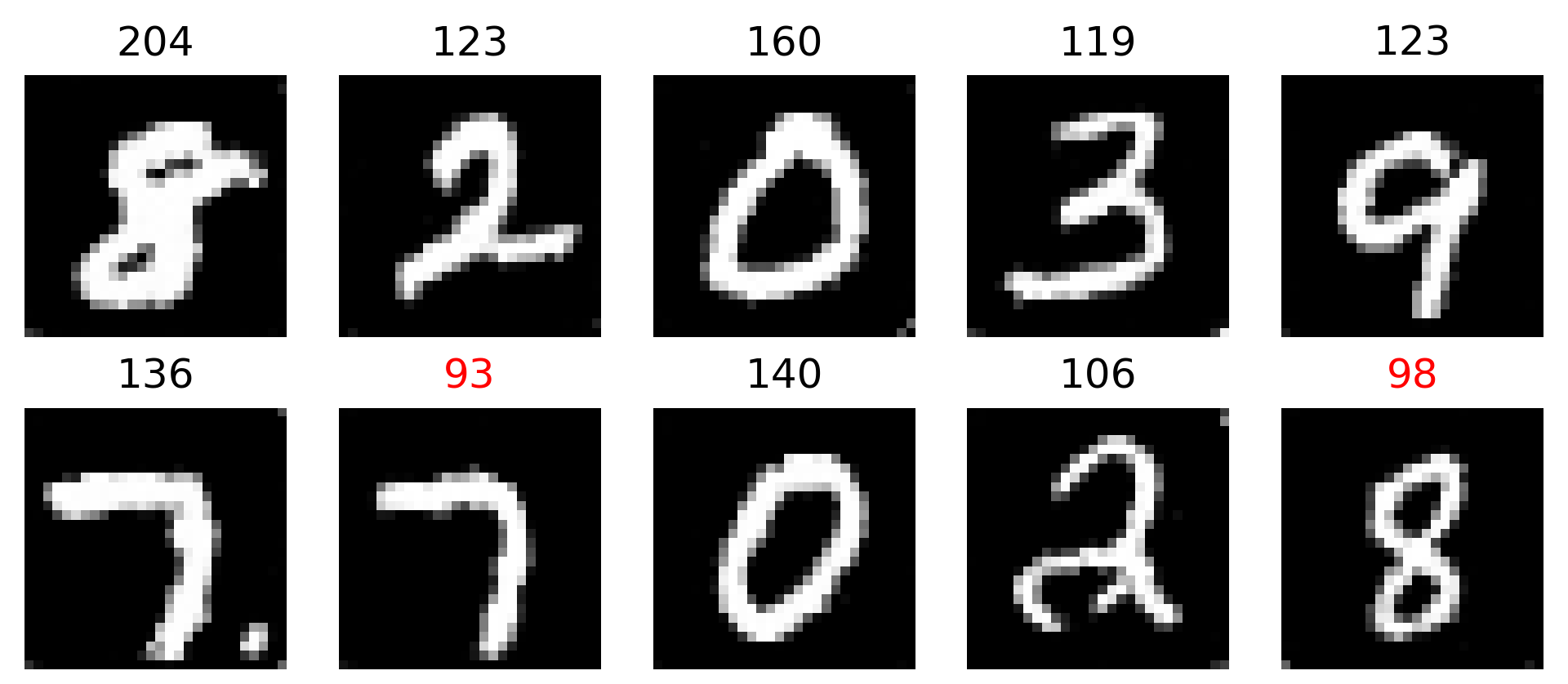}
        \end{subfigure} &
        \begin{subfigure}{0.4\textwidth}
            \includegraphics[width=\linewidth]{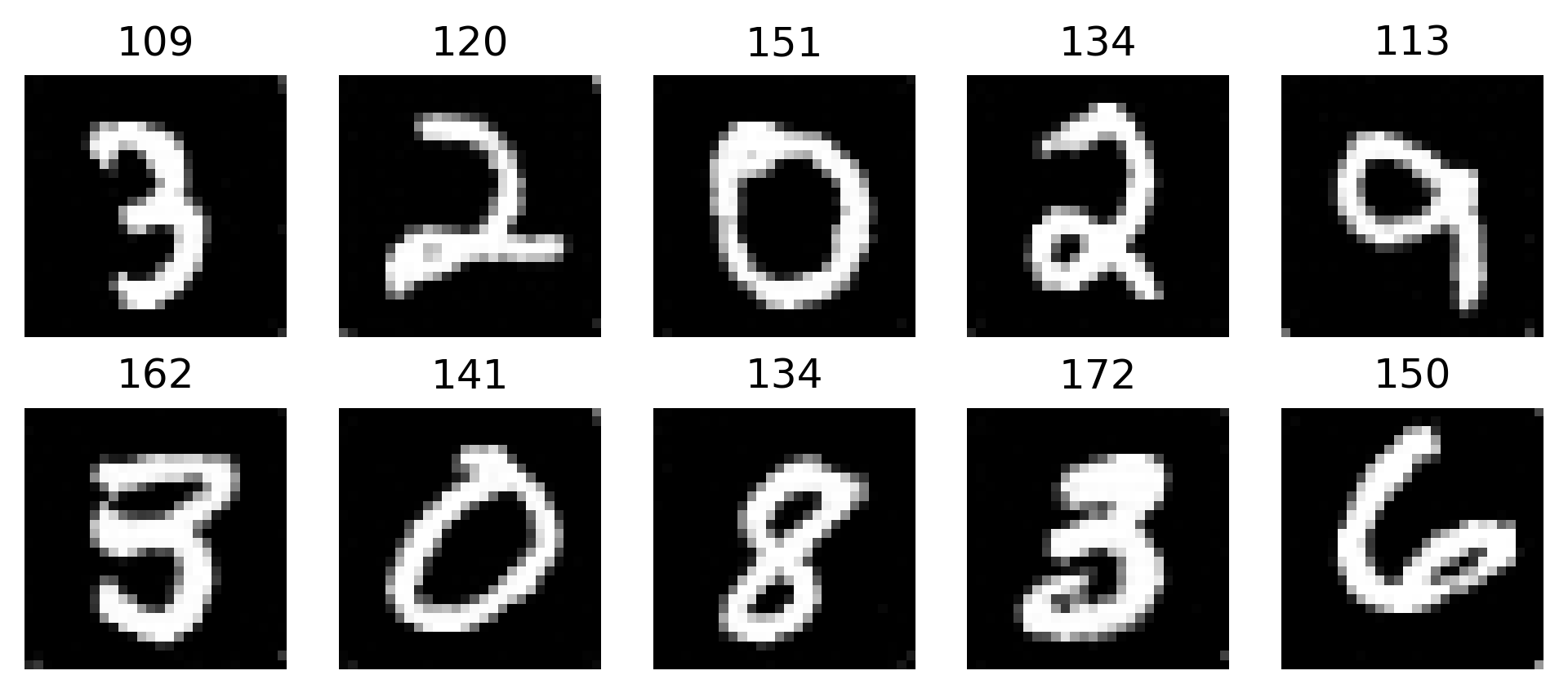}
        \end{subfigure} \\
        \rotatebox{90}{~~~~~Thickness}&\hspace{-14pt}
        \begin{subfigure}{0.4\textwidth}
            \includegraphics[width=\linewidth]{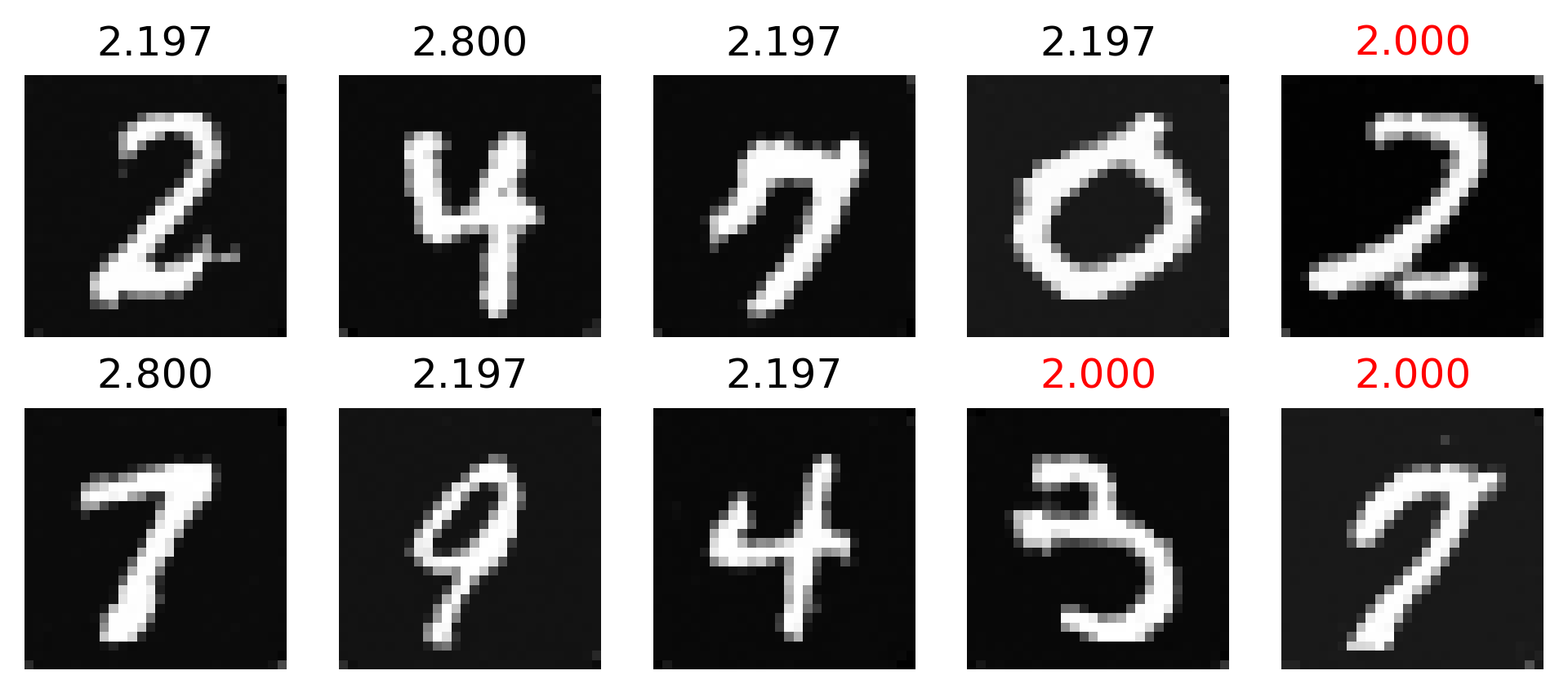}
        \end{subfigure} &
        \begin{subfigure}{0.4\textwidth}
            \includegraphics[width=\linewidth]{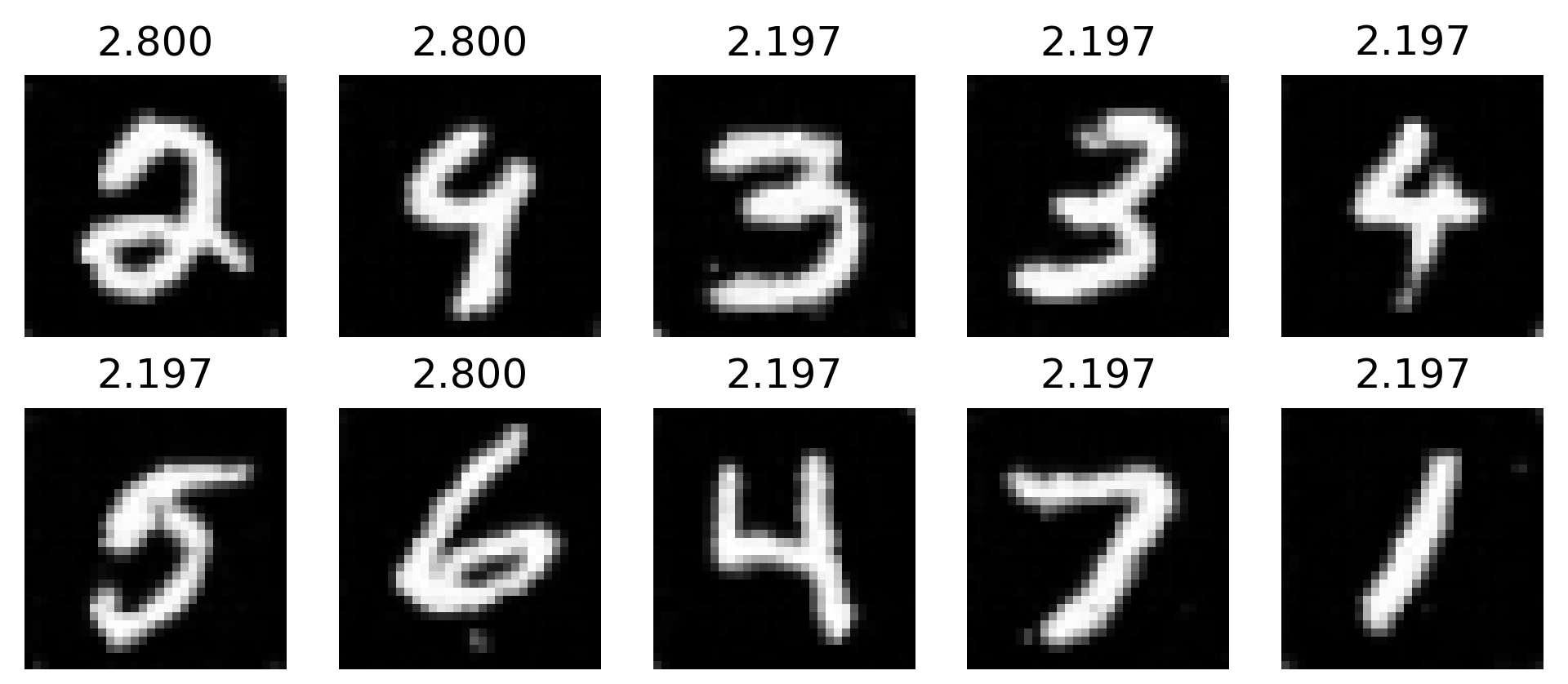}
        \end{subfigure} 
    \end{tabular}
    \caption{MNIST digits generated by different methods: \textbf{Left Panel}: FM, \textbf{Right Panel}: FM-RE. The number above each image denotes the number of white pixels / maximum thickness. Metrics violating the constraints are highlighted in red.}\label{fig:mnistatt}
\end{figure*}

\subsection{MNIST Digits Generation with Certain Attributes}\label{sec:exp_mnist}

In this subsection, we consider the task of generating MNIST digits with the following constraints. We define pixels with values greater than $128$ as white pixels. All other pixels are defined as black pixels.

 \textbf{Brightness:}  The brightness constraint requires an image to have at least $100$ bright pixels. 
 
 \textbf{Thickness:} The maximum thickness of a digit is measured as the maximum distance from a white pixel to its nearest black pixel. The thickness constraint requires an image to
have a maximum thickness that is strictly greater than $2$ and strictly less than $3$.

Both constraint sets are non-convex, and the distance functions to both constraint sets exist, but are discrete. A projection onto the {brightness} constraint set is available, whereas a projection onto the {thickness} constraint set is unavailable. These features determine that among the constraint-aware approaches, only FM-RE and PDM can apply to the \textbf{Brightness} case, and only FM-RE can apply to the \textbf{Thickness} case.
Fig.~\ref{fig:mnistatt} shows the generated images from FM and FM-RE, and all generated digits are clean and clearly recognizable. Table~\ref{tb:mnist_attribute} shows the FID and constraint violation rate of the compared methods. For the brightness constraint, PDM generates samples with $0$ constraint violation by design. FM-RE performs closely with FM in the FID metric, however, displays a much higher constraint satisfaction rate. Both constraints are non-convex and have unclear boundary information. The thickness constraint is an even more subtle requirement that may be challenging for human observers to detect. Despite this, FM-RE is capable of generating samples satisfying constraints with a much higher probability than FM, demonstrating FM-RE's effectiveness.

Across all experiments, we conclude that FM-RE applies to the widest range of constraints and offers the advantage of maintaining a generated distribution close to the target distribution while exhibiting a low probability of generating samples violating the constraints.

\subsection{Adversarial Example Generation for Hard-Label Black-Box Image Classification Models}\label{sec:exp_adv}

We apply FM-RE to the task of training an adversarial example generator for hard-label black-box image classification models. The generated images are expected to be perceptually imperceptible to humans, while subject to the constraint that they must be assigned a label different from the ground truth. Notably, this requirement is a complex and unclear constraint, where membership is the only available information for the constraint.

Specifically, consider a black-box image classification model that takes in an image and outputs only the top-$1$ prediction's class label; the objective is to generate images whose true labels are different from the labels predicted by the classifier. The indicator function for the constraint is defined as follows
\begin{equation}
    \mathbf{1}_{\mathcal{C}}\left(X^{\theta_2,\sigma}_1\right) = \begin{cases}
        1,&\text{if}\quad \hat{y}\left(X^{\theta_2,\sigma}_1\right)\neq y\left(X^{\theta_2,\sigma}_1\right),
        \\0, &\text{otherwise}.
    \end{cases}
\end{equation}
in which $X^{\theta_2,\sigma}_1$ is a generated sample. $y(\cdot)$ and $\hat{y}(\cdot)$ denote the ground truth and the classifier's label prediction, respectively. However, $y(\cdot)$ is hard to obtain for randomly generated samples unless manually checking them, especially when $X^{\theta_2,\sigma}_1$ are adversarial samples. To address this issue, we introduce the following rule: $X^{\theta_1}_{t_0} = (1-t_0)X_0+t_0X_1$. With $t_0$ set close to $1$, the ground truth of $X^{\theta_2,\sigma}_1$ and $X_1$ are highly likely to be the same, i.e., $y\left(X^{\theta_2,\sigma}_1\right)=y\left(X_1\right)$. The objective is to generate a slightly perturbed adversarial version of $X_1$, i.e., $X^{\theta_2,\sigma}_1$. Next, we train adversarial example generators for two pre-trained models: LeNet-$5$ \citep{lenet5} and ResNet-$50$ \citep{resnet50}. To accommodate this task, in which the adversarial samples are initially unavailable, we adapt the clean training sets for FM-RE training.

\begin{figure*}[htb]
    \centering
    \begin{tabular}{cc}
     \rotatebox{90}{~~~~~~~~~~~MNIST}&\hspace{-14pt}
        \begin{subfigure}{0.85\textwidth}
            \includegraphics[width=\linewidth]{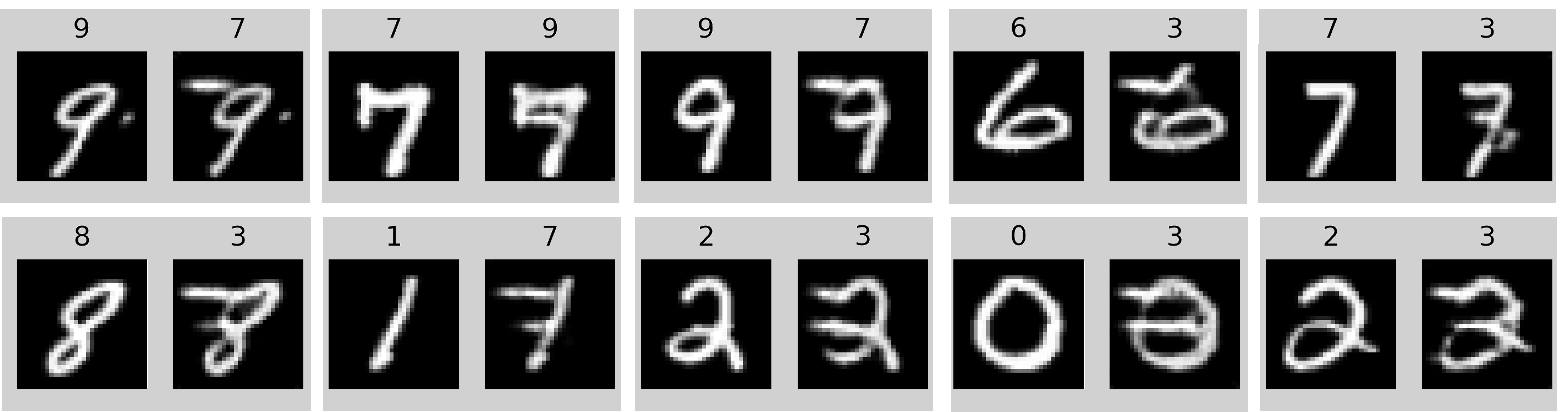}
        \end{subfigure} \\
        \rotatebox{90}{~~~~~~~~~~CIFAR-$10$}&\hspace{-14pt}
        \begin{subfigure}{0.85\textwidth}
            \includegraphics[width=\linewidth]{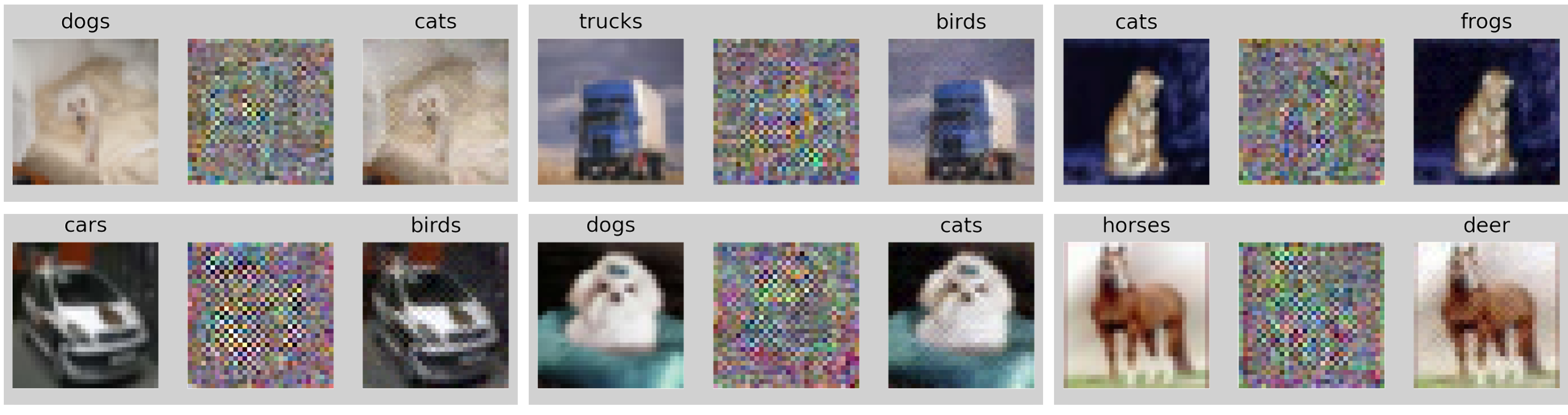}
        \end{subfigure} 
    \end{tabular}
    \caption{Pair-wise comparison between the clean images (left) and images generated via FM-RE (right). For CIFAR-$10$, we also provide the residue (middle). The predicted class by the corresponding pre-trained model is annotated above each image. The accuracy of LeNet-$5$ on MNIST (i.e., the constraint violation rate $\mathbb{P}(X_1\notin \mathcal{C})$) drops from $99.1\%$ to $18.7\%$. The accuracy of ResNet-$50$ on CIFAR-$10$ (i.e., the constraint violation rate $\mathbb{P}(X_1\notin \mathcal{C})$) drops from $95.3\%$ to $28.2\%$.}\label{fig:advdisplay}
\end{figure*}


Fig.~\ref{fig:advdisplay} shows the clean images along with the generated adversarial examples using FM-RE. We can observe that the FM-RE model is able to generate slightly perturbed images, which can misguide the classifier. A few images are perturbed to resemble a different digit (e.g., an `1' changed to a `7'), while others retain their original human-perceived class but are misclassified by the model. This further
demonstrates FM-RE's capability to match the training distributions while adapting to complex constraints via membership querying.

\subsection{Recommendations for Hyperparameter Selection}

In practice, the key hyperparameter in FM-DD is the constraint weight $\lambda$, and the key hyperparameter in FM-RE includes the constraint weight $\lambda$, the randomization start time $t_0$, and the number of randomized steps $N_2$. In general, $\lambda$ controls the strength of constraint pressure. $t_0$
 determines how early constraint-aware exploration influences the trajectory, and $N_2$ determines the sampling precisions. For novel tasks, we recommend initializing the hyperparameters within the ranges 
$\lambda \in [10, 20]$, $t_0 \in [0.6, 0.8]$, and $N_2 = 20$, which in our experience provide a reasonable balance between constraint enforcement and sample fidelity. 
After obtaining a stable trained model, higher constraint satisfaction rates can be achieved by increasing $\lambda$ and decreasing $t_0$. If the constraint satisfaction rate is adequate but a degradation in sample quality is observed, one may instead decrease $\lambda$ and increase $t_0$ and/or $N_2$ to alleviate excessive constraint pressure while maintaining sufficient exploration capacity.

\subsection{Limitations}

The main limitations of this work are twofold. First, while FM-DD and FM-RE promote constraint satisfaction by penalizing constraint-violating samples, they do not provide formal guarantees on constraint satisfaction, and excessively large $\lambda$ may degrade the sample quality. Second, both methods require sampling full trajectories to evaluate terminal constraint satisfaction, leading to higher computational cost compared to conventional FM and certain alternative constrained generation approaches (e.g., MDM or reflection-based methods) whose objectives rely on single-step evaluations rather than complete trajectory simulations, as shown in Table~\ref{tb:traintime}. The additional computational overhead relative to conventional FM depends on the difficulty of the constraint: harder-to-satisfy constraints typically require more extensive exploration during training, thereby increasing the computational overhead.

\begin{table}[htb]
\scalebox{0.78}{
\begin{tabular}{|c|c|c|c|c|c|c|c|c|c|}
\hline
 & Box & $2$ boxes & $8$d $l_2$ ball & $20$d $l_2$ ball & Subspace & MNIST brightness & MNIST thickness & LeNet-5 & CIFAR-10 \\ \hline
FM & $3$ & $3$ & $4$ & $4$ & $10$ & $64$ & $64$ & N/A & N/A \\\hline
RFM/MDM & $3$ & $3$ & $4$ & $4$ & $10$ & N/A & N/A & N/A & N/A \\ \hline
NAMM & $8$ & $8$ & $12$ & $12$ & $24$ & N/A & N/A & N/A & N/A \\ \hline
PDM & $3$ & $3$ & $4$ & $4$ & $10$ & $64$ & N/A & N/A & N/A \\\hline
FM-DD & $5$ & $5$ & $8$ & $10$ & $20$ & N/A & N/A & N/A & N/A \\ \hline
FM-RE & $5$ & $5$ & $8$ & $10$ & $20$ & $85$ & $192$ & $712$ & $2028$ \\ \hline

\end{tabular}}
\caption{Training time (in minutes) for each experiment.}
\label{tb:traintime}
\end{table}

\section{Conclusions and Future Directions}

In this paper, we present general constraint-aware FM frameworks: FM-DD and FM-RE. FM-DD incorporates a smooth, differentiable distance-based penalty for constraint violations, while FM-RE enforces constraint satisfaction by randomization directed to explore the constraints via access to a membership oracle. Compared to FM-RE, FM-DD yields empirically higher constraint satisfaction but requires access to a differentiable distance function to the constraint set. Thus, FM-DD is preferable when such distances are available, whereas FM-RE is suited to general settings with a membership oracle for the constraint set.

 There are several pertinent research directions that this paper opens up: optimal choice of randomization time in FM-RE, quantification of the regret in using a randomized flow compared to the optimal constrained flow, provable guarantees on the likelihood of constraint satisfaction, and extension of the use of additional randomization in conjunction with other methods, such as DDPM, SDE-based, and Diffusion-Bridge based models.

\section*{Acknowledgments}
Zhengyan Huan and Shuchin Aeron are supported via NSF under Cooperative Agreement PHY-2019786  and DOE DE-SC0023964. Shuchin Aeron would also like to acknowledge funding via NSF DMS 2309519.  Li-Ping Liu is supported by NSF Award 2239869. 
\newpage

\bibliography{main}
\bibliographystyle{tmlr}

\clearpage
\appendix

\section{Proof of Proposition \ref{prop:policy_grad}}
\label{app:proof}

\begin{proposition*}
Let $X_1(\mathcal{T})$ denote the generated sample of trajectory $\mathcal{T}$, i.e., $X_1^{\theta,\sigma}$. Under assumptions of all $X_t^{\theta,\sigma}$ having strictly positive density on $\mathbb{R}^d$, $\mathcal{C}$ being a subset of $\mathbb{R}^d$ with positive Lebesgue measure, existence and uniqueness of the solutions driven by the respective ODEs, boundedness of the gradient of the log-densities, and finiteness of all the expectations involved:
\begin{equation*}
\nabla_{{\theta},\sigma}\mathbb{E}\left[\mathbf{1}_{\mathcal{C}}(X^{\theta,\sigma}_1)\right] = \mathbb{E}_{\mathcal{T}}\left[\sum_{i=0}^{N-1}\left(\nabla_{{\theta},\sigma}\log \pi_{{\theta},\sigma}(\tilde{U}^{\theta,\sigma}_{i\Delta t} | X^{\theta,\sigma}_{i\Delta t})\right)\mathbf{1}_{\mathcal{C}}(X_1(\mathcal{T}))\right].
\end{equation*}
\end{proposition*}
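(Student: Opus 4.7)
The plan is to use the REINFORCE / log-derivative ``policy gradient'' trick, which is the natural way to differentiate an expectation of a function (here, the indicator $\mathbf{1}_{\mathcal C}$) whose direct pathwise gradient is degenerate. The first observation is that the Euler update $X^{\theta,\sigma}_{(i+1)\Delta t} = X^{\theta,\sigma}_{i\Delta t} + \tilde{U}^{\theta,\sigma}_{i\Delta t}\,\Delta t$ is \emph{deterministic} given $(X^{\theta,\sigma}_{i\Delta t}, \tilde{U}^{\theta,\sigma}_{i\Delta t})$, so I would view the trajectory $\mathcal{T}$ as parameterized by the ``fresh'' continuous random variables $(X_0, \tilde{U}_0, \tilde{U}_{\Delta t}, \ldots, \tilde{U}_{(N-1)\Delta t})$; the intermediate states $X_{i\Delta t}$ are then deterministic functions of these alone, carrying no $(\theta,\sigma)$ dependence. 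Under this reparameterization, the joint density factorizes in Markov fashion as $q_0(x_0) \prod_{i=0}^{N-1}\pi_{\theta,\sigma}(\tilde{u}_i \mid x_i)$, with $(\theta,\sigma)$ entering only through the Gaussian factors $\pi_{\theta,\sigma}$.

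Second, I would write
\begin{equation*}
\mathbb{E}\bigl[\mathbf{1}_{\mathcal C}(X_1^{\theta,\sigma})\bigr] \;=\; \int q_0(x_0)\prod_{i=0}^{N-1}\pi_{\theta,\sigma}(\tilde{u}_i \mid x_i)\;\mathbf{1}_{\mathcal C}(x_1)\,dx_0\, d\tilde{u}_0\cdots d\tilde{u}_{N-1},
\end{equation*}
and differentiate under the integral sign. Since the domain of integration, the initial density $q_0$, and the function $\mathbf{1}_{\mathcal C}(x_1)$ are all independent of $(\theta,\sigma)$, only the product of $\pi$'s moves under the gradient. Applying the standard identity $\nabla\prod_i\pi_i = \bigl(\prod_i\pi_i\bigr)\sum_i\nabla\log\pi_i$ and then re-folding the integral back into an expectation over $\mathcal{T}$ yields precisely the claimed score-function formula.

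The main obstacle is rigorously justifying the interchange of $\nabla_{\theta,\sigma}$ and the integral, since this is exactly what converts the degenerate pathwise gradient (where $\nabla_{\theta,\sigma}\mathbf{1}_{\mathcal C}(X_1^{\theta,\sigma})$ is zero almost everywhere and supplies no useful information) into a tractable expectation. Here the hypotheses of the proposition do the work: strict positivity of the densities makes $\log\pi_{\theta,\sigma}$ well-defined, boundedness of the gradient of the log-density gives a uniform-in-$(\theta,\sigma)$ bound of the form $|\nabla_{\theta,\sigma}\pi_{\theta,\sigma}(\tilde{u}|x)| \le \pi_{\theta,\sigma}(\tilde{u}|x)\cdot M$, and finiteness of the stated expectations combined with $\mathbf{1}_{\mathcal C}\le 1$ supplies the integrable dominating function required by dominated convergence. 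A subsidiary check is that the Euler composition $(x_0,\tilde{u}_0,\ldots)\mapsto x_i$ is well-defined at every $i$, which follows from the existence/uniqueness hypothesis on the ODE (and is in fact trivial for the Euler map itself). Crucially, the non-differentiability of $\mathbf{1}_{\mathcal C}$ never bites, because in this reparameterization $x_1$ as a function of the integration variables carries no $(\theta,\sigma)$ dependence whatsoever, so no derivative ever hits the indicator.
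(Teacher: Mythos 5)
Your proposal is correct and follows essentially the same route as the paper's proof: write the trajectory density with only the Gaussian policy factors $\pi_{\theta,\sigma}$ carrying $(\theta,\sigma)$ dependence, differentiate under the integral (justified by the stated positivity, boundedness, and integrability assumptions via dominated convergence), and apply the log-derivative trick to obtain the score-function form. The only cosmetic difference is that you integrate over the fresh noise variables directly, whereas the paper keeps the (degenerate, parameter-independent) Euler transition densities in the factorization and lets their log-gradients vanish; both bookkeeping choices lead to the identical final identity.
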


\begin{proof}
Following \eqref{eq:taudef}, {since by construction the variables $(X_{i\Delta t}, U_{i\Delta t})$ form a Markov chain and $(X_{(i-1)\Delta t},  U_{(i-1)\Delta t}) \perp U_{(i)\Delta t}|X_{(i)\Delta t}$,} the probability density of a realization $\tau = (x^{\theta,\sigma}_0,\tilde{u}^{\theta,\sigma}_0,x^{\theta,\sigma}_{\Delta t}, \tilde{u}^{\theta,\sigma}_{\Delta t},\cdots, x^{\theta,\sigma}_{1}) $ of $\mathcal{T}$ is:
\begin{equation}
\begin{aligned}
  p(\tau;{\theta}, \sigma)  &= p(x^{\theta,\sigma}_{0})\prod_{i=0}^{N-1} p(x^{\theta,\sigma}_{(i+1)\Delta t}|x^{\theta,\sigma}_{i\Delta t}, \tilde{u}^{\theta,\sigma}_{i\Delta t})\pi_{{\theta},\sigma}(\tilde{u}^{\theta,\sigma}_{i\Delta t} | x^{\theta,\sigma}_{i\Delta t}) .
  \end{aligned}
\end{equation}

    Note that $x_0^{\theta,\sigma}\sim q_0$ and the transitions to $x^{\theta,\sigma}_{(i+1)\Delta t}$ given $x^{\theta,\sigma}_{i\Delta t}, \tilde{u}^{\theta,\sigma}_{i\Delta t}$ are independent of $\theta, \sigma$. Taking the gradient of the log probability density of $\tau$ leads to
\begin{equation}\label{eq:logprob_gradient}
    \begin{aligned}
        \nabla_{{\theta},\sigma}\log p(\tau;{\theta},\sigma) =& \nabla_{{\theta},\sigma} \log p(x^{\theta,\sigma}_{0}) + \sum_{i=0}^{N-1}\nabla_{{\theta},\sigma}\log p(x^{\theta,\sigma}_{(i+1)\Delta t}|x^{\theta,\sigma}_{i\Delta t}, \tilde{u}^{\theta,\sigma}_{i\Delta t})\\& + \sum_{i=0}^{N-1}\nabla_{{\theta},\sigma}\log \pi_{{\theta},\sigma}(\tilde{u}^{\theta,\sigma}_{i\Delta t} | x^{\theta,\sigma}_{i\Delta t})\\
        =&  \sum_{i=0}^{N-1}\nabla_{{\theta},\sigma}\log \pi_{{\theta},\sigma}(\tilde{u}^{\theta,\sigma}_{i\Delta t} | x^{\theta,\sigma}_{i\Delta t}).
    \end{aligned}
\end{equation}

Recall the objective in \eqref{eq:FM-REoriobj}:
\begin{equation}
\begin{alignedat}{2}\label{eq:expind}
        \mathbb{E}\left[\mathbf{1}_{\mathcal{C}}(X^{\theta,\sigma}_1)\right] &= \int p(x^{\theta,\sigma}_1)\mathbf{1}_{\mathcal{C}}(x^{\theta,\sigma}_1) dx^{\theta,\sigma}_1\\
        &=\int p(x^{\theta,\sigma}_{0})\prod_{i=0}^{N-1} p(x^{\theta,\sigma}_{(i+1)\Delta t}|x^{\theta,\sigma}_{i\Delta t}, \tilde{u}^{\theta,\sigma}_{i\Delta t})\pi_{{\theta},\sigma}(\tilde{u}^{\theta,\sigma}_{i\Delta t} | x^{\theta,\sigma}_{i\Delta t}) \cdot\\&\qquad\mathbf{1}_{\mathcal{C}}(x^{\theta,\sigma}_1)~dx^{\theta,\sigma}_1d\tilde{u}^{\theta,\sigma}_{1-\Delta t}dx^{\theta,\sigma}_{1-\Delta t}\cdots d\tilde{u}^{\theta,\sigma}_0dx^{\theta,\sigma}_0\\&=\int p(\tau;{\theta}, \sigma)\mathbf{1}_{\mathcal{C}}(x_1(\tau)) ~d\tau. 
\end{alignedat}
\end{equation}
Under the stated assumptions, by applying the Dominated Convergence Theorem, one can exchange the integration and differentiation, and we can write the gradient in \eqref{eq:expind} as:
\begin{equation}\label{eq:grad}
    \begin{alignedat}{2}
\nabla_{{\theta},\sigma}\mathbb{E}\left[\mathbf{1}_{\mathcal{C}}(X^{\theta,\sigma}_1)\right]=&\nabla_{{\theta},\sigma} \int p(\tau;{\theta},\sigma)\mathbf{1}_{\mathcal{C}}(x_1(\tau))~d\tau\\ 
        =&\int (\nabla_{{\theta},\sigma} p(\tau;{\theta},\sigma))\mathbf{1}_{\mathcal{C}}(x_1(\tau))~d\tau&\hspace{-22mm}\text{Exchange integration and differentiation}\\
        =&\int p(\tau;{\theta},\sigma) (\nabla_{{\theta},\sigma} \log p(\tau;{\theta},\sigma))\mathbf{1}_{\mathcal{C}}(x_1(\tau))~d\tau&\text{Log derivative trick}
        \\=&  \mathbb{E}_{\mathcal{T}}\left[(\nabla_{{\theta},\sigma} \log p(\mathcal{T};{\theta},\sigma)) \mathbf{1}_{\mathcal{C}}(X_1(\mathcal{T}))\right]&\text{Definition of expectation} \\
        = &\mathbb{E}_{\mathcal{T}}\left[\sum_{i=0}^{N-1}\left(\nabla_{{\theta},\sigma}\log \pi_{{\theta},\sigma}(\tilde{U}^{\theta,\sigma}_{i\Delta t} | X^{\theta,\sigma}_{i\Delta t})\right)\mathbf{1}_{\mathcal{C}}(X_1(\mathcal{T}))\right]. &\text{Plug in \eqref{eq:logprob_gradient}}
    \end{alignedat}
\end{equation}
{Note that in the formal derivation above we have assumed that all densities are strictly positive.}
\end{proof}

\section{Proof of Proposition \ref{prop:policy_gradb}}
\label{proof42}
\begin{proposition*} 
{For randomization starting at time $t_0 > 0$,  under the assumptions of Proposition \ref{prop:policy_grad} we have:}
\begin{equation*}
\nabla_{{\theta},\sigma}\mathbb{E}\left[\mathbf{1}_{\mathcal{C}}(X^{\theta,\sigma}_1)\right] = \mathbb{E}_{\mathcal{T}}[\nabla_{{\theta},\sigma} \log p(X^{\theta}_{t_0})\mathbf{1}_{\mathcal{C}}(X_1(\mathcal{T}))] + \mathbb{E}_{\mathcal{T}}\left[\sum_{i=0}^{N_2-1}\left(\nabla_{{\theta},\sigma}\log \pi_{{\theta},\sigma}(\tilde{U}^{\theta,\sigma}_{t_0+i\Delta t} | X^{\theta,\sigma}_{t_0+i\Delta t})\right)\mathbf{1}_{\mathcal{C}}(X_1(\mathcal{T}))\right].
\end{equation*}
\end{proposition*}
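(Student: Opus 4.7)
The plan is to mirror the argument used for Proposition~\ref{prop:policy_grad}, but now splitting the trajectory into a deterministic segment on $[0,t_0]$ (where only $\theta$ acts) and a stochastic segment on $[t_0,1]$ (where $\theta$ and $\sigma$ both act). Concretely, I would first write the probability density of the realized stochastic trajectory $\tau = (x^{\theta,\sigma}_{t_0},\tilde{u}^{\theta,\sigma}_{t_0},x^{\theta,\sigma}_{t_0+\Delta t},\ldots,x^{\theta,\sigma}_{1})$ of $\mathcal{T}$ as
\begin{equation*}
p(\tau;\theta,\sigma) = p(x^{\theta}_{t_0}) \prod_{i=0}^{N_2-1} p(x^{\theta,\sigma}_{t_0+(i+1)\Delta t} \mid x^{\theta,\sigma}_{t_0+i\Delta t},\tilde{u}^{\theta,\sigma}_{t_0+i\Delta t})\,\pi_{\theta,\sigma}(\tilde{u}^{\theta,\sigma}_{t_0+i\Delta t} \mid x^{\theta,\sigma}_{t_0+i\Delta t}),
\end{equation*}
where $p(x^{\theta}_{t_0})$ is the pushforward of $q_0$ under the deterministic forward-Euler map on $[0,t_0]$ driven by $u_\theta$; this initial density depends on $\theta$ but, crucially, not on $\sigma$. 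The one-step transition kernels encode a shifted Dirac through the Euler update and therefore do not depend on $\theta,\sigma$ once the conditioning variables are fixed.

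Next I would write $\mathbb{E}[\mathbf{1}_\mathcal{C}(X^{\theta,\sigma}_1)] = \int p(\tau;\theta,\sigma)\,\mathbf{1}_\mathcal{C}(x^{\theta,\sigma}_1)\,d\tau$, invoke the Dominated Convergence Theorem under the stated boundedness and finiteness assumptions to exchange gradient and integral, and apply the log-derivative identity $\nabla p = p\,\nabla \log p$. Taking the gradient of the log of $p(\tau;\theta,\sigma)$ above, the deterministic transition terms contribute zero, leaving exactly
\begin{equation*}
\nabla_{\theta,\sigma}\log p(\tau;\theta,\sigma) = \nabla_{\theta,\sigma}\log p(x^{\theta}_{t_0}) + \sum_{i=0}^{N_2-1}\nabla_{\theta,\sigma}\log \pi_{\theta,\sigma}(\tilde{u}^{\theta,\sigma}_{t_0+i\Delta t}\mid x^{\theta,\sigma}_{t_0+i\Delta t}).
\end{equation*}
Multiplying by $\mathbf{1}_\mathcal{C}(x^{\theta,\sigma}_1)$ and taking the expectation over $\mathcal{T}$ gives the two terms in the claim. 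The argument is essentially the same as in Proposition~\ref{prop:policy_grad}, with the only new ingredient being the additional score $\nabla_{\theta,\sigma}\log p(X^{\theta}_{t_0})$ arising from the $\theta$-dependence of the starting point of the stochastic segment.

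The main obstacle is justifying that $p(x^{\theta}_{t_0})$ is a well-defined, smoothly $\theta$-dependent density so that $\nabla_{\theta,\sigma}\log p(X^{\theta}_{t_0})$ makes sense (here, only the $\theta$ component is nonzero, since the pre-$t_0$ flow is independent of $\sigma$). This follows from the existence/uniqueness assumption on the ODE: the composition of forward-Euler maps that produces $X^{\theta}_{t_0}$ from $X_0 \sim q_0$ is a $C^1$ diffeomorphism for sufficiently small $\Delta t$, so the change-of-variables formula yields a positive density, and the assumption that log-density gradients are bounded, together with finiteness of the relevant expectations, furnishes the integrable dominating function needed for the interchange of differentiation and integration. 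With these points in hand, the rest of the derivation is the same chain of steps as in the proof of Proposition~\ref{prop:policy_grad}.
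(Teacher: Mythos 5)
Your proposal is correct and follows essentially the same route as the paper: factorize the trajectory density starting at $t_0$ into the (now $\theta$-dependent) initial density $p(x^{\theta}_{t_0})$ times the transition kernels and policy terms, exchange gradient and integral, apply the log-derivative trick, and observe that only the initial-density score and the policy scores survive. Your added remarks on why $p(x^{\theta}_{t_0})$ is a well-defined, $\theta$-differentiable density (via the forward-Euler composition being a diffeomorphism for small $\Delta t$) make explicit a point the paper leaves implicit, but the argument is the same.
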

\begin{proof}
    The realization of a stochastic trajectory starts at $t=t_0$ and becomes $\tau = (x^{\theta,\sigma}_{t_0},\tilde{u}^{\theta,\sigma}_{t_0},x^{\theta,\sigma}_{t_0+\Delta t}, \tilde{u}^{\theta,\sigma}_{t_0+\Delta t},\cdots, x^{\theta,\sigma}_{1}) $ with $N_2$ steps. The PDF of $\tau$ is
    \begin{equation}
\begin{aligned}
  p(\tau;{\theta}, \sigma)  &= p(x^{\theta,\sigma}_{t_0})\prod_{i=0}^{N_2-1} p(x^{\theta,\sigma}_{t_0+(i+1)\Delta t}|x^{\theta,\sigma}_{t_0+i\Delta t}, \tilde{u}^{\theta,\sigma}_{t_0+i\Delta t})\pi_{{\theta},\sigma}(\tilde{u}^{\theta,\sigma}_{t_0+i\Delta t} | x^{\theta,\sigma}_{t_0+i\Delta t}).
  \end{aligned}
\end{equation}
    
It is important to mention that $ x^{\theta}_{t_0}=x^{\theta,\sigma}_{t_0} =x^{\theta,\sigma}_{0}+\Delta t\sum_{i=0}^{N_1-1}{u}_{\theta}(x^{\theta,\sigma}_{i\Delta t}, i\Delta t)$ is dependent on $\theta$. This leads to $\nabla_{{\theta},\sigma} \log p(x^{\theta,\sigma}_{t_0})\neq 0$. Taking the gradient of the log probability density of $\tau$ gives
\begin{equation}\label{eq:logprob_gradient2}
    \begin{aligned}
        \nabla_{{\theta},\sigma}\log p(\tau;{\theta},\sigma) =& \nabla_{{\theta},\sigma} \log p(x^{\theta,\sigma}_{t_0}) + \sum_{i=0}^{N_2-1}\nabla_{{\theta},\sigma}\log p(x^{\theta,\sigma}_{t_0+(i+1)\Delta t}|x^{\theta,\sigma}_{t_0+i\Delta t}, \tilde{u}^{\theta,\sigma}_{t_0+i\Delta t})\\& + \sum_{i=0}^{N_2-1}\nabla_{{\theta},\sigma}\log \pi_{{\theta},\sigma}(\tilde{u}^{\theta,\sigma}_{t_0+i\Delta t} | x^{\theta,\sigma}_{t_0+i\Delta t})\\
        =& \nabla_{{\theta},\sigma} \log p(x^{\theta}_{t_0})+\sum_{i=0}^{N_2-1}\nabla_{{\theta},\sigma}\log \pi_{{\theta},\sigma}(\tilde{u}^{\theta,\sigma}_{t_0+i\Delta t} | x^{\theta,\sigma}_{t_0+i\Delta t}),
    \end{aligned}
\end{equation}
\begin{equation}
\begin{alignedat}{2}\label{eq:expind2}
        \mathbb{E}\left[\mathbf{1}_{\mathcal{C}}(X^{\theta,\sigma}_1)\right] &= \int p(x^{\theta,\sigma}_1)\mathbf{1}_{\mathcal{C}}(x^{\theta,\sigma}_1) dx^{\theta,\sigma}_1\\
        &=\int p(x^{\theta,\sigma}_{t_0})\prod_{i=0}^{N-1} p(x^{\theta,\sigma}_{t_0+(i+1)\Delta t}|x^{\theta,\sigma}_{t_0+i\Delta t}, \tilde{u}^{\theta,\sigma}_{t_0+i\Delta t})\pi_{{\theta},\sigma}(\tilde{u}^{\theta,\sigma}_{t_0+i\Delta t} | x^{\theta,\sigma}_{t_0+i\Delta t}) \cdot\\&\qquad\mathbf{1}_{\mathcal{C}}(x^{\theta,\sigma}_1)~dx^{\theta,\sigma}_1d\tilde{u}^{\theta,\sigma}_{1-\Delta t}dx^{\theta,\sigma}_{1-\Delta t}\cdots d\tilde{u}^{\theta,\sigma}_{t_0}dx^{\theta,\sigma}_{t_0}\\&=\int p(\tau;{\theta}, \sigma)\mathbf{1}_{\mathcal{C}}(x_1(\tau)) ~d\tau. 
\end{alignedat}
\end{equation}

Under the assumption that one can exchange the integration and differentiation, and then by plugging in 
\eqref{eq:logprob_gradient2},

\begin{equation}\label{eq:grad2}
\resizebox{1\textwidth}{!}{$
    \begin{alignedat}{2}
\nabla_{{\theta},\sigma}\mathbb{E}\left[\mathbf{1}_{\mathcal{C}}(X^{\theta,\sigma}_1)\right]=&\mathbb{E}_{\mathcal{T}}\left[(\nabla_{{\theta},\sigma} \log p(\mathcal{T};{\theta},\sigma)) \mathbf{1}_{\mathcal{C}}(X_1(\mathcal{T}))\right]\\ 
        = &\mathbb{E}_{\mathcal{T}}[\nabla_{{\theta},\sigma} \log p(X^{\theta}_{t_0})\mathbf{1}_{\mathcal{C}}(X_1(\mathcal{T}))] + \mathbb{E}_{\mathcal{T}}\left[\sum_{i=0}^{N_2-1}\left(\nabla_{{\theta},\sigma}\log \pi_{{\theta},\sigma}(\tilde{U}^{\theta,\sigma}_{t_0+i\Delta t} | X^{\theta,\sigma}_{t_0+i\Delta t})\right)\mathbf{1}_{\mathcal{C}}(X_1(\mathcal{T}))\right].
    \end{alignedat}$}
\end{equation}
\end{proof}

We additionally note that, adding a baseline $b(X^{\theta,\sigma}_{t_0+i\Delta t})$ in \eqref{eq:grad2} can preserve the unbiased gradient. For any fixed time step $i$,
\begin{equation}
\begin{aligned}
&\mathbb{E}_{\mathcal{T}}\left[b(X^{\theta,\sigma}_{t_0+i\Delta t})\nabla_{{\theta},\sigma}\log \pi_{{\theta},\sigma}(\tilde{U}^{\theta,\sigma}_{t_0+i\Delta t} | X^{\theta,\sigma}_{t_0+i\Delta t})\right] \\&= \mathbb{E}_{X^{\theta,\sigma}_{t_0+i\Delta t}}\left[b(X^{\theta,\sigma}_{t_0+i\Delta t})\mathbb{E}_{\tilde{U}^{\theta,\sigma}_{t_0+i\Delta t}\sim \pi_{{\theta},\sigma}(\tilde{U}^{\theta,\sigma}_{t_0+i\Delta t} | X^{\theta,\sigma}_{t_0+i\Delta t})}\left[\nabla_{{\theta},\sigma}\log \pi_{{\theta},\sigma}(\tilde{U}^{\theta,\sigma}_{t_0+i\Delta t} | X^{\theta,\sigma}_{t_0+i\Delta t})\right]\right]\\&=\mathbb{E}_{X^{\theta,\sigma}_{t_0+i\Delta t}}\left[b(X^{\theta,\sigma}_{t_0+i\Delta t})\int \pi_{{\theta},\sigma}(\tilde{U}^{\theta,\sigma}_{t_0+i\Delta t} | X^{\theta,\sigma}_{t_0+i\Delta t})\nabla_{{\theta},\sigma}\log \pi_{{\theta},\sigma}(\tilde{U}^{\theta,\sigma}_{t_0+i\Delta t} | X^{\theta,\sigma}_{t_0+i\Delta t})~d ~\tilde{U}^{\theta,\sigma}_{t_0+i\Delta t}\right]
\\&=\mathbb{E}_{X^{\theta,\sigma}_{t_0+i\Delta t}}\left[b(X^{\theta,\sigma}_{t_0+i\Delta t})\int \nabla_{{\theta},\sigma}\pi_{{\theta},\sigma}(\tilde{U}^{\theta,\sigma}_{t_0+i\Delta t} | X^{\theta,\sigma}_{t_0+i\Delta t})~d \tilde{U}^{\theta,\sigma}_{t_0+i\Delta t}\right]
\\&=\mathbb{E}_{X^{\theta,\sigma}_{t_0+i\Delta t}}\left[b(X^{\theta,\sigma}_{t_0+i\Delta t}) \nabla_{{\theta},\sigma}\int\pi_{{\theta},\sigma}(\tilde{U}^{\theta,\sigma}_{t_0+i\Delta t} | X^{\theta,\sigma}_{t_0+i\Delta t})~d \tilde{U}^{\theta,\sigma}_{t_0+i\Delta t}\right]
\\&=\mathbb{E}_{X^{\theta,\sigma}_{t_0+i\Delta t}}\left[b(X^{\theta,\sigma}_{t_0+i\Delta t}) \nabla_{{\theta},\sigma}1 \right]
 \\&= 0.
\end{aligned}
\end{equation}
Therefore \eqref{eq:grad2} can also be written as
\begin{equation}\label{eq:gradbase}
    \begin{alignedat}{2}
&\nabla_{{\theta},\sigma}\mathbb{E}\left[\mathbf{1}_{\mathcal{C}}(X^{\theta,\sigma}_1)\right]\\
        = &\mathbb{E}_{\mathcal{T}}[\nabla_{{\theta},\sigma} \log p(X^{\theta}_{t_0})\mathbf{1}_{\mathcal{C}}(X_1(\mathcal{T}))] + \mathbb{E}_{\mathcal{T}}\left[\sum_{i=0}^{N_2-1}\left(\nabla_{{\theta},\sigma}\log \pi_{{\theta},\sigma}(\tilde{U}^{\theta,\sigma}_{t_0+i\Delta t} | X^{\theta,\sigma}_{t_0+i\Delta t})\right)(\mathbf{1}_{\mathcal{C}}(X_1(\mathcal{T}))-b(X^{\theta,\sigma}_{t_0+i\Delta t}))\right].
    \end{alignedat}
\end{equation}

\section{Experiment Details}\label{sec:expdetails}
All experiments are run on an NVIDIA L40 GPU with $ 46$ GB memory. The parameter settings for each experiment are given in Table~\ref{tb:parametersetting}. In terms of computational cost, the training stage of FM-DD and FM-RE requires more computation than FM since the full generation process needs to be simulated. The training time required for the proposed methods is also closely related to the complexity of the constraints. In general, more complicated constraints require more time to converge since more exploration is required for the model to learn how to satisfy them. FM-DD's and FM-RE's computational costs at the sampling stage are similar to that of FM. We report the training time for all experiments (in minutes) in Table~\ref{tb:traintime}. Notably, RFM, MDM, and PDM have total training times comparable to vanilla generative models (e.g., FM and DDPM) and incur only marginal additional training costs. For MDM, the computational overhead arises from computing the closed-form bijective projection between the constraint set and the unconstrained space, which typically requires minimal cost in applicable cases. For RFM, the overhead comes from computing reflections when trajectories reach constraint boundaries, which likewise introduces only minor computational cost. PDM is an inference-time method and does not modify the training stage of diffusion models. Similar to the proposed methods, NAMM also incurs additional training costs, primarily due to training the forward and backward projection models between the constraint set and the unconstrained space.

\begin{table}[thb]
\centering
\begin{tabular}{|l|c|cccc|}
\hline
 & FM-DD & \multicolumn{4}{c|}{FM-RE} \\ \hline
 & $\lambda$ & \multicolumn{1}{c|}{$N_1$} & \multicolumn{1}{c|}{$N_2$} & \multicolumn{1}{c|}{$t_0$} & $\lambda$ \\ \hline
Box (Sec.~\ref{sec:exp_syn})& $80$ & \multicolumn{1}{c|}{$60$} & \multicolumn{1}{c|}{$15$} & \multicolumn{1}{c|}{$0.8$} & $80$ \\ \hline
$2$ boxes (Sec.~\ref{sec:exp_syn})& $80$ & \multicolumn{1}{c|}{$60$} & \multicolumn{1}{c|}{$15$} & \multicolumn{1}{c|}{$0.8$} & $80$ \\ \hline
$8$d $\ell_2$ ball (Sec.~\ref{sec:exp_syn})& $20$ & \multicolumn{1}{c|}{$60$} & \multicolumn{1}{c|}{$15$} & \multicolumn{1}{c|}{$0.8$} & $20$ \\ \hline
$20$d $\ell_2$ ball (Sec.~\ref{sec:exp_syn})& $20$ & \multicolumn{1}{c|}{$60$} & \multicolumn{1}{c|}{$15$} & \multicolumn{1}{c|}{$0.8$} & $20$ \\ \hline
Subspace (Sec.~\ref{sec:exp_syn})& $1$ & \multicolumn{1}{c|}{$60$} & \multicolumn{1}{c|}{$10$} & \multicolumn{1}{c|}{$0.9$} & $1$ \\ \hline
MNIST-brightness (Sec.~\ref{sec:exp_mnist})& N/A & \multicolumn{1}{c|}{$60$} & \multicolumn{1}{c|}{$20$} & \multicolumn{1}{c|}{$0.6$} & $10$ \\ \hline
MNIST-thickness (Sec.~\ref{sec:exp_mnist})& N/A & \multicolumn{1}{c|}{$60$} & \multicolumn{1}{c|}{$20$} & \multicolumn{1}{c|}{$0.6$} & $10$ \\ \hline
LeNet-$5$ (Sec.~\ref{sec:exp_adv})& N/A & \multicolumn{1}{c|}{N/A} & \multicolumn{1}{c|}{$15$} & \multicolumn{1}{c|}{$0.8$} & $6$ \\ \hline
ResNet-$50$ (Sec.~\ref{sec:exp_adv})& N/A & \multicolumn{1}{c|}{N/A} & \multicolumn{1}{c|}{$15$} & \multicolumn{1}{c|}{$0.8$} & $20$ \\ \hline
\end{tabular}
\caption{Parameter settings for each experiment.}\label{tb:parametersetting}
\end{table}

\subsection{Synthetic Experiments}\label{app:baselinedetails}
 The detailed results for synthetic experiments, including standard deviations, are presented in Table~\ref{tb:synresvar}.
Detailed descriptions of the compared baselines are provided as follows.

\paragraph{FM \citep{lipman2023flow}:} Flow matching (FM) is a popular framework for generative models without considering requirements on constraint satisfactions. 
\paragraph{RFM \citep{reflectedfm}:} Reflected flow matching (RFM) extends FM to constrained domains by introducing reflected ODEs that keep trajectories inside the constraint boundaries. Explicit algorithmic designs for constraints, including reflected CNFs, analytical constraint-preserving conditional velocity fields, and a reflection-based sampling procedure, are required.
\paragraph{MDM \citep{mirror}:} Mirror diffusion models (MDM) propose learning diffusion models on convex constraint sets by mapping data to an unconstrained dual space via mirror maps and performing standard diffusion there. Explicit algorithmic designs for constraints, including analytic mirror maps and inverse mappings that guarantee constraint satisfaction by construction, are required.
\paragraph{NAMM \citep{feng2025neural}:} Neural approximate mirror map (NAMM) proposes learning approximate mirror maps and their inverses to transform constrained data into an unconstrained mirror space for diffusion modeling, handling general (possibly non-convex) constraints via differentiable distance functions. Explicit algorithmic designs for constraints, including the differentiable distance functions to the constraint sets and constraint-aware training losses, are required.
\paragraph{PDM \citep{Christopher2024ConstrainedSW}:}
Projected diffusion model (PDM) keep the training process of conventional diffusion models, and enforces constraint satisfaction by inserting projection steps onto arbitrary constraint sets at each sampling iteration. Explicit algorithmic designs for constraints, including a projected update and a projection operator onto constraint sets, are required.

\begin{table}[thb]
\centering
\scalebox{0.8}{
\begin{tabular}{|l|l|l|l|l|l|l|}
\hline
 &  & Box & $2$ boxes & $8$d $\ell_2$ ball & $20$d $\ell_2$ ball & Subspace \\ \hline
\multirow{4}{*}{SWD} & FM & $0.1268\pm 0.0692$ & $0.2260\pm 0.0691 $ & $0.0193\pm 0.0029$ & $0.0087\pm 0.0008$ & $0.0372\pm 0.0039$ \\\cline{2-7} 
 & RFM & $0.1258\pm 0.0688$ & N/A & $0.0177\pm 0.0026$ & $0.0098\pm 0.0009$ & N/A\\\cline{2-7} 
 & MDM & $0.1431\pm 0.0747$ & N/A & $0.0292\pm 0.0017$ & $0.0159\pm 0.0044$ & N/A\\ \cline{2-7} 
  & NAMM &$0.1680\pm0.0733$&$0.2101 \pm 0.0795$&$0.0228 \pm 0.0037$&$0.0205 \pm 0.0008$& $0.2124 \pm 0.0137$\\ \cline{2-7} 
  & PDM &$0.1268\pm 0.0759$&$0.5661 \pm 0.0409$&$0.0200 \pm 0.0031$&$0.0257 \pm 0.0018$&$0.3079 \pm 0.0046$\\ \cline{2-7}
 & FM-DD & $0.1228\pm 0.0726$ & $0.2174\pm 0.0809$ & $0.0175\pm 0.0027$ & $0.0086\pm 0.0008$ & $0.0356\pm 0.0034$ \\ \cline{2-7} 
 & FM-RE & $0.1250\pm 0.0685$ & $0.2104\pm 0.0783$ & $0.0194\pm 0.0029$ & $0.0132\pm0.0010$ & $0.0355\pm 0.0033$  \\ \hline
\multirow{4}{*}{\begin{tabular}[c]{@{}l@{}}$\mathbb{P}(X_1\notin \mathcal{C})$\\ (\textperthousand)\end{tabular}} & FM & $1.132\pm 0.3444$ & $4.580\pm 0.6012$ & $23.67 \pm 1.4322$ & $90.82\pm3.1783$ & $790.1\pm 3.4733$ \\\cline{2-7} 
 & RFM & $0$ & N/A & $0$ & $0$ & N/A \\  \cline{2-7} 
 & MDM & $0$ & N/A & $0$ & $0$ & N/A \\ \cline{2-7}
  & NAMM & $0.387\pm0.2011$ & $0.379\pm 0.1868$ &$0.138\pm 0.0792$&  $2.890\pm 0.5281$& $707.64\pm 3.6507$  \\ \cline{2-7} 
  & PDM &$0$&$0$&$0$&$0$&$0$\\ \cline{2-7}
 & FM-DD & $0.053\pm 0.0768$ & $0.073\pm 0.0968$ & $0.140\pm 0.1114$ & $0.502\pm 0.2433 $ & $86.24\pm 2.7056$ \\ \cline{2-7} 
 & FM-RE & $0.066\pm0.0839$ & $0.222\pm0.1285$ & $0.768\pm 0.2502$ & $2.513\pm 0.5170 $ & $98.58\pm 2.9260$ \\ \hline
\end{tabular}}
\caption{Performance comparison for synthetic experiments. The values stand for: mean $\pm$ std.}\label{tb:synresvar}
\end{table}

\subsubsection{$2$-D Toy Examples}\label{sec:2dexampleappendix}

\textbf{Box}~ The first case we consider is a cropped Gaussian distribution constrained by a box. The constraint set is given as 
\begin{equation}
    \mathcal{C} = \left\{(x_1,x_2)|-4\leq x_1\leq 4, -4\leq x_2\leq 4\right\}.
\end{equation}
$q_1$ is the mixture of two Gaussians: $\mathcal{N}\left(\left[ \begin{matrix}
   3\\3 
\end{matrix}
    \right],\left[ \begin{matrix}
   0.6&0\\0&0.6 
\end{matrix}
    \right]\right)$ and $\mathcal{N}\left(\left[ \begin{matrix}
   -3\\-3 
\end{matrix}
    \right],\left[ \begin{matrix}
   1.5&0\\0&1.5 
\end{matrix}
    \right]\right)$ with equal mixing weights, which are truncated by $\mathcal{C}$.

\textbf{$\mathbf{2}$ boxes}~ The second case is a uniform distribution on two disconnected boxes. The constraint set is 
\begin{equation}
        \mathcal{C} = \left\{(x_1,x_2)|1\leq |x_1|\leq 5, 1\leq |x_2|\leq 5, x_1x_2>0\right\}.
\end{equation}
The target distribution is the uniform distribution in $\mathcal{C}$.

 \paragraph{FM-RE vs. Fine-Tuning:}
We contrast our proposed joint optimization methods with the idea of fine-tuning methods \citep{ftdiffusion} in the $2$ boxes case. \citet{ftdiffusion} proposes to involve RL-based fine-tuning in diffusion models to optimize a \emph{new} downstream objective while preserving proximity to the base model’s output. Following this idea, a fine-tuning objective can be defined via
\begin{align}
    \arg \min_{\theta} \int_{0}^{1} \mathbb{E}[\| u_\theta(X_t, t) - u_{\text{base}}(X_t, t)\|^2] dt +   \lambda \,\, \mathbb{E} [d(X_{1}^\theta, \mathcal{C})], 
\end{align}
in which $u_{\text{base}}$ is a pre-trained FM model and $u_{\text{base}}$ is the fine-tuned model. Note that fine-tuning is also a two-stage approach, however, the purposes are obtaining a base model and updating it for another task. FM-RE's two stages serve for different purposes, i.e., obtaining the estimated velocity field for $t<t_0$ and $t\geq t_0$, respectively.

The comparison between FM-RE and a fine-tuned FM model is shown in Fig.~\ref{fig:fto}. We can observe a clear distribution shift in a fine-tuned model, i.e., the white gap near the boundary. The results also show a larger SWD for the fine-tuned model, while the constraint satisfaction rate for both models remains close. It is important to note that the two main objectives in constraint generation, i.e., matching distributions and satisfying constraints, are well aligned. In such cases, joint optimization is preferable to fine-tuning, as it enables the model to learn a shared representation that simultaneously achieves both objectives. The proposed joint optimization method, FM-RE, avoids the risk of forgetting to learn the distribution in fine-tuning, while also providing a more balanced trade-off between the two objectives.

\begin{figure}[htb]
    \centering
        \begin{tabular}{ccc}
        \begin{subfigure}{0.3\textwidth}
            \includegraphics[width=\linewidth]{figsv2/2uniform_PGFM.png}
            \caption{FM-RE. SWD $=0.2104$, ~~$\mathbb{P}(X_1\notin \mathcal{C})=2.22\times10^{-4}$.}
            \label{fig:nres}
        \end{subfigure} \hspace{60pt}
        \begin{subfigure}{0.3\textwidth}
            \includegraphics[width=\linewidth]{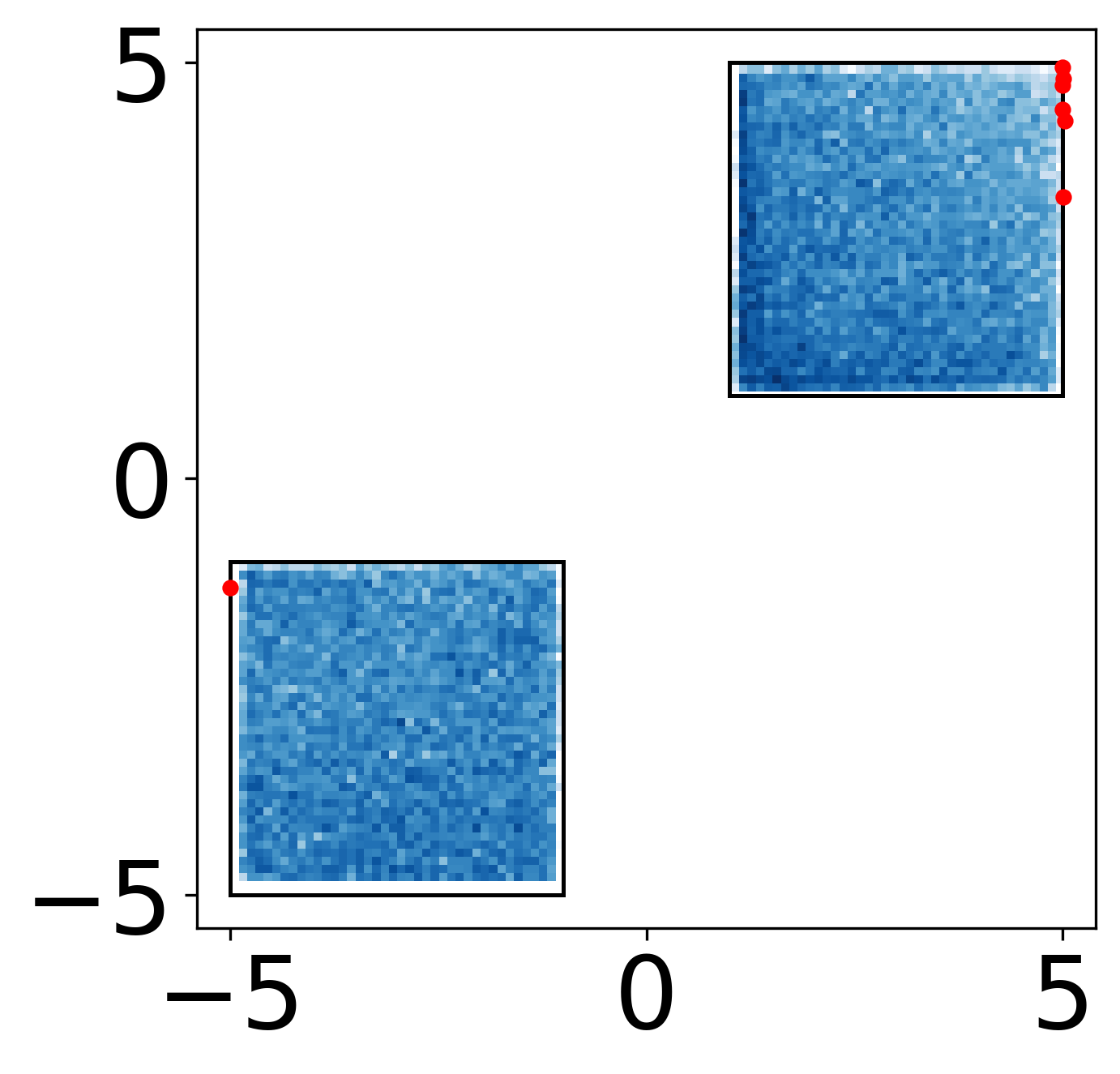}
            \caption{Fine-tuning. SWD $=0.2755$, $\mathbb{P}(X_1\notin \mathcal{C})=2.46\times10^{-4}$.}
            \label{fig:npres}
        \end{subfigure} &\hspace{-15pt}
         \end{tabular}
\caption{The histplots of samples generated by FM-RE and a fine-tuned FM model. Samples violating the constraints are highlighted in red. The total sample size is $50000$.}
    \label{fig:fto}
\end{figure}

\subsubsection{Gaussian Mixture Distribution with $\ell_2$ Ball Constraints}
We next evaluate the proposed methods for distributions and constraints with higher dimensions following \citet{mirror}. The constraint is an $\ell_2$ ball constraint, which can be given as
\begin{equation}
    \mathcal{C} = \left\{x\in \mathbb{R}^d\Big|\|x\|_2\leq 1\right\},
\end{equation}
in which the dimension is selected to be $d = \{ 8, 20\}$. The target distribution $q_1$ is a Gaussian mixture model.  We consider $d$
isotropic Gaussians, each with variance $0.05$, centered at each of the $d$ standard unit vectors, and reject samples
outside $\mathcal{C}$.

In this case, we additionally report and analyze the mean gradient norm and variance during the training process. Following \eqref{eq:gradbase}, adding a baseline $b(X^{\theta,\sigma}_{t_0+i\Delta t})$ preserves an unbiased estimation of the gradient. Subtracting a baseline centered around the mean of the indicator term can reduce the magnitude of stochastic fluctuations, thereby reducing the gradient variance. While one may choose to train an additional neural network to model the baseline, we provide an efficient choice of setting $b(X^{\theta,\sigma}_{t_0+i\Delta t})=1$. As we are expecting most of the generated samples to satisfy the constraints, $\mathbf{1}_{\mathcal{C}}(X_1(\mathcal{T}))-b(X^{\theta,\sigma}_{t_0+i\Delta t})$ becomes $0$ for most of the trajectories, especially for the end of the training process. The training processes of two models—one with a baseline and one without—are shown in Fig.~\ref{fig:grads}.
The results show that adding a baseline $b(X^{\theta,\sigma}_{t_0+i\Delta t})=1$ effectively reduces the mean gradient norm and gradient variance during the training process. At the end of the training process, both models achieve similar performance as shown in Table~\ref{tb:synres}.

\begin{figure}[htb]
    \centering
    \includegraphics[width=0.45\linewidth]{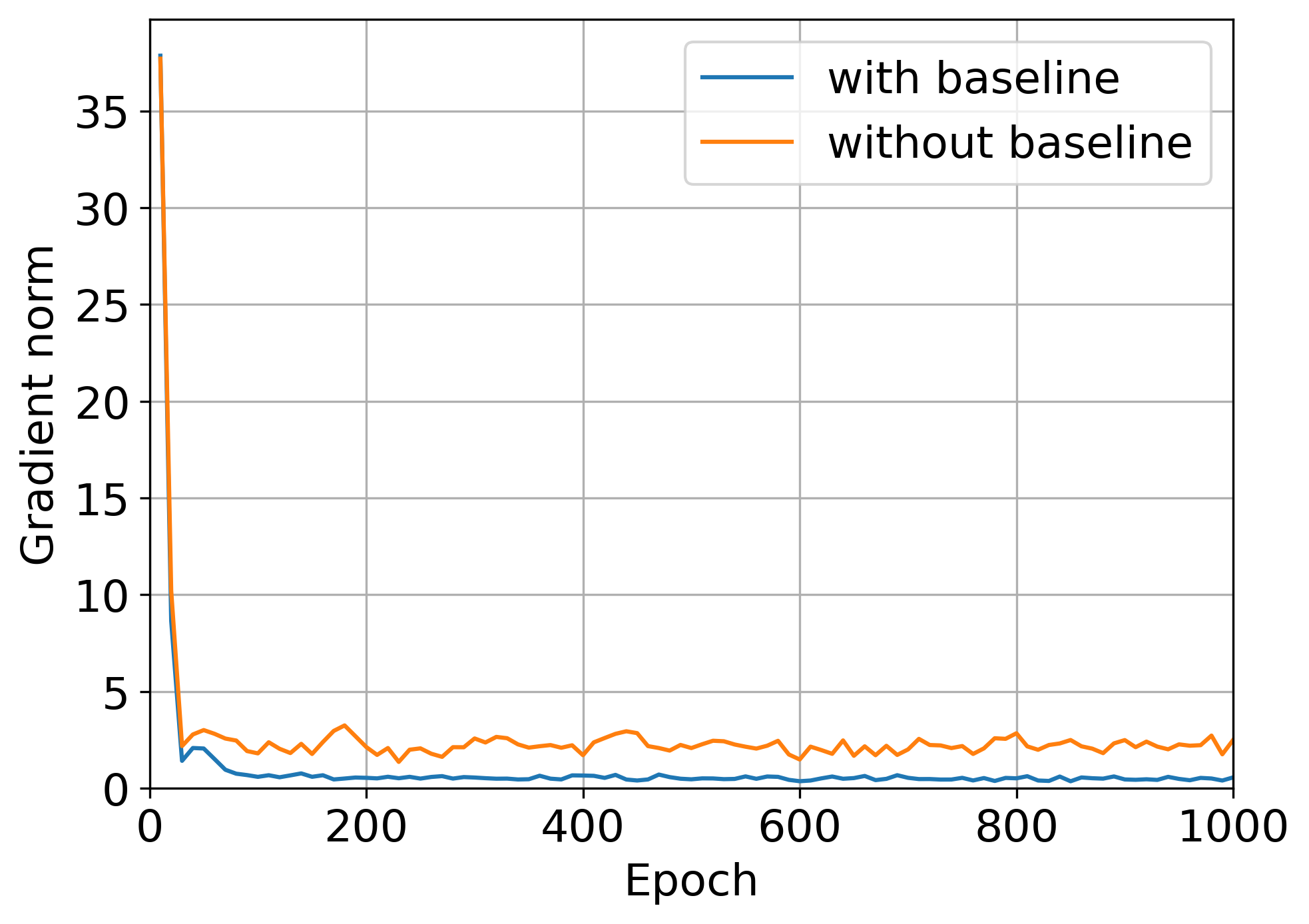}
        \includegraphics[width=0.45\linewidth]{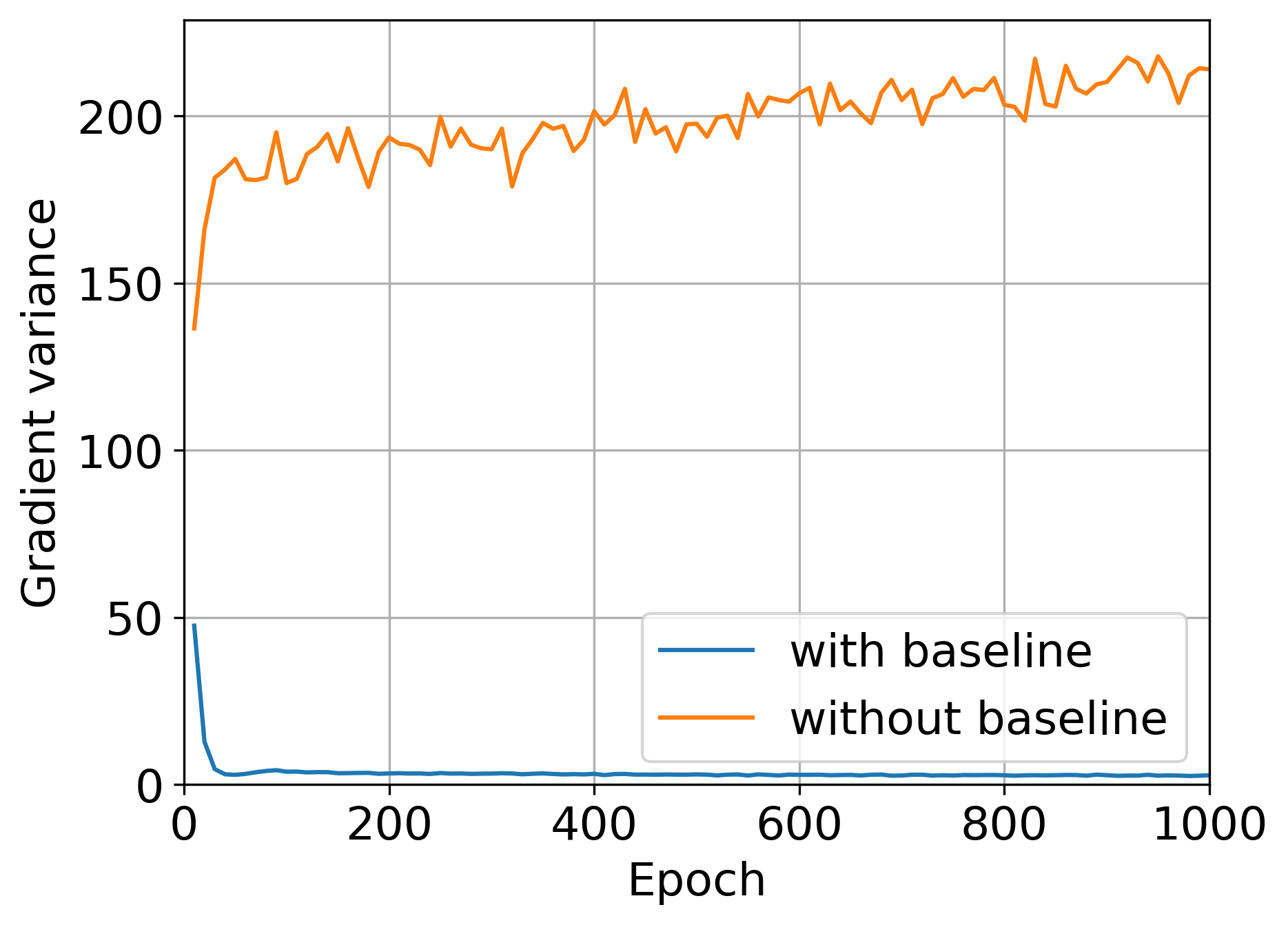}
    \caption{The mean gradient norm and variance during the training process. At each epoch, the model parameters are fixed, and the same training epoch is repeated $100$ times to obtain $100$ gradient vectors. The mean gradient norm and variance are then computed from these $100$ gradients. ``with baseline'' follows the objective in \eqref{eq:gradbase} with $b(X^{\theta,\sigma}_{t_0+i\Delta t})=1$ and ``without baseline'' follows the objective in \eqref{eq:grad2}.}
    \label{fig:grads}
\end{figure}

\subsubsection{Subspace Constraint}

The considered subspace constraint is given by
\begin{equation}
    \mathcal{C} =\left\{x\in \mathbb{R}^{10}|\mathbf{1}_{10}^\top x+10=0\right\}. 
\end{equation}

The target distribution $q_1$ is the projection of a $10$-D Gaussian distribution to a subspace. Specifically, we first generate a $10$-D multivariate Gaussian distribution. The samples are denoted by $x\in \mathbb{R}^{10}$ with $x\sim \mathcal{N}(\mu,\Sigma)$, where $\mu = \bm{0}_{10}\in \mathbb{R}^{10}$ and $\Sigma = \bm{I}_{10}$. Then each sample is orthogonally projected to a $9$-D hyperplane $\mathcal{C}$. The generated samples are not likely to strictly belong to $\mathcal{C}$ since it has no interior. Therefore, a small error is allowed and the indicator function is redefined as
\begin{equation}
    \mathbf{1}_{\mathcal{C}}(x) = \begin{cases}
        1,\quad&\text{if  }d(x,\mathcal{C})\leq 5\times 10^{-4},
        \\0,&\text{otherwise}.
    \end{cases}
\end{equation}

\subsection{MNIST Digits Generation with Certain Attributes}

\subsubsection{Brightness Constraint}
We first specify a brightness constraint. For each MNIST image, we create a binary version of it by applying a threshold of $128$: pixel values greater than $128$ are set to $255$ (white), and all others are set to $0$ (black). An image is considered bright if its binary version contains at least $100$ white pixels (i.e., pixels with value $255$); otherwise, it is considered dark. The objective is to generate only bright images based on this rule.

Out of the MNIST training set, $30379$ images are bright images. We select this subset of bright images as our training set. It is important to note that both the training images and the generated images remain in their original (non-binary) form. Brightness constraints on the clean images are based solely on their binary versions.

To evaluate the performance of the generated samples, we compute the Fréchet Inception Distance (FID) between generated digits and real digits in the training set. Since MNIST images are grayscale and have different features compared with natural images, we use a LeNet-$5$ model pre-trained on MNIST as the feature extractor. Specifically, we remove the final classification layer of LeNet-$5$ and extract features for both generated and real images. The FID is then computed based on the mean and covariance of these features. 

The following experiment aims for a further empirical study on FM-RE's trade-off between constraint satisfaction and sample quality by the choice of key hyperparameters $\lambda$ and $t_0$. First $t_0=0.6$ is fixed, and $\lambda$ varies from $100$ to $0$. Then $\lambda=0.6$ is fixed, and $t_0$ varies from $0$ to $1$. Note that either setting $\lambda=0$ or setting $t_0=1$ makes FM-RE equivalent to FM. The results are shown in Table~\ref{tb:varylambda} and Table~\ref{tb:varyt0}, respectively. We can observe that both fixing $t_0$, increasing $\lambda$ and fixing $\lambda$, moving $t_0$ towards $0$ generally increases the constraint satisfaction rate and decreases the sample quality. If both $t_0$ and $\lambda$ are set properly, FM-RE's sample quality will be similar to FM's; however, FM-RE's constraint satisfaction rate is much higher than FM's.

\begin{table}[t]
\centering
\scalebox{0.85}{
\begin{tabular}{|c|c|c|c|c|c|c|c|c|c|c|c|c|}
\hline
$\lambda$ & $100$ & $50$ & $30$ & $20$ & $10$ & $5$ & $2$ & $1$ & $0.5$ & $0.1$ & $0.01$ & $0$ (FM) \\ \hline
$\mathbb{P}(X_1\notin \mathcal{C}) (\%, \downarrow)$ & $0.78$ & $0.89$ & $0.65$ & $0.96$ & $1.12$ & $1.36$ & $1.34$ & $1.51$ & $3.09$ & $6.46$ & $8.56$ & $9.14$ \\ \hline
FID ($\downarrow$) & $10.47$ & $7.91$ & $7.13$ & $6.38$ & $5.86$ & $6.17$ & $5.51$ & $5.89$ & $5.69$ & $5.97$ & $6.23$ & $6.16$ \\ \hline
\end{tabular}}
\caption{FM-RE's trade-off between constraint violation rate, sample quality (FID) and $\lambda$. $t_0=0.6$ is fixed.}
\label{tb:varylambda}
\end{table}

\begin{table}[t]
\centering
\scalebox{0.85}{
\begin{tabular}{|c|c|c|c|c|c|c|c|c|c|c|c|}
\hline
$t_0$ & $0$ & $0.1$ & $0.2$ & $0.3$ & $0.4$ & $0.5$ & $0.6$ & $0.7$ & $0.8$ & $0.9$ & $1$ (FM) \\ \hline
$\mathbb{P}(X_1\notin \mathcal{C}) (\%, \downarrow)$ & $0.42$ & $0.02$ & $0.17$ & $0.10$ & $0.24$ & $0.66$ & $1.12$ & $2.83$ & $4.63$ & $6.92$ & $9.14$ \\ \hline
FID ($\downarrow$) & $10.41$ & $21.96$ & $13.27$ & $16.43$ & $9.52$ & $8.82$ & $5.86$ & $5.66$ & $6.28$ & $5.99$ & $6.16$ \\ \hline
\end{tabular}}
\caption{FM-RE's trade-off between constraint violation rate, sample quality (FID) and $t_0$. $\lambda=10$ is fixed.
}
\label{tb:varyt0}
\end{table}

\subsubsection{Maximum Thickness Range Constraint}
Secondly, we consider a maximum thickness range constraint, which is also based on the binary version of each image. The maximum thickness of a digit is measured as the maximum distance from a white pixel to its nearest black pixel. It is measured via \texttt{cv2.distanceTransform} function in the implementation. We define an image to satisfy this constraint if its maximum thickness is strictly greater than $2$ and strictly less than $3$. $21405$ images in the MNIST training set satisfy this constraint, and the subset consisting of them is chosen as the training set for this task.

\begin{table}[t]
\centering
\begin{tabular}{|c|ccc|ccc|}
\hline
 & \multicolumn{3}{c|}{$\ell_2$ norm} & \multicolumn{3}{c|}{Accuracy (\%)} \\ \hline
 & \multicolumn{1}{c|}{Clean} & \multicolumn{1}{c|}{Gaussian} & FM-RE & \multicolumn{1}{c|}{Clean} & \multicolumn{1}{c|}{Gaussian} & FM-RE \\ \hline
MNIST & \multicolumn{1}{c|}{$0$} & \multicolumn{1}{c|}{$5.47$} & $5.47$ & \multicolumn{1}{c|}{$99.1$} & \multicolumn{1}{c|}{$99.2$} & $18.7$ \\ \hline
CIFAR-$10$ & \multicolumn{1}{c|}{$0$} & \multicolumn{1}{c|}{$11.15$} & $11.15$ & \multicolumn{1}{c|}{$95.3$} & \multicolumn{1}{c|}{$82.3$} & $28.2$ \\ \hline
\end{tabular}
\caption{Performance comparison among clean samples, samples with Gaussian noise (added to a similar level of $\ell_2$ norm with FM-RE), and samples generated by FM-RE. }\label{tb:advres}
\end{table}

\subsection{Adversarial Example Generation for Hard-Label Black-Box Image Classification Models}

\subsubsection{LeNet-$\mathbf{5}$ Model for MNIST Digits Classification}
LeNet-$5$ \citep{lenet5} is a classic CNN architecture for MNIST digits classification, which accepts $32\times32$ grayscale images. Note that MNIST consists of $28\times28$ images. We first resize every image in MNIST to $32\times32$. The pretrained model has an accuracy of $99.1\%$. The training set of this pre-trained model should not be accessible due to the black-box setting.
We split the testing set of MNIST into our training and testing sets using a $80:20$ ratio.

For evaluation on our testing set, we only generate once for each image. Table~\ref{tb:advres} shows that 
The average $\ell_2$ norm between the generated images and the clean images is $5.47$ ($4.19$ if scaled back to $28\times 28$). Adding a similar level of Gaussian noise has almost no influence on LeNet-$5$'s accuracy. However, samples from FM-RE can reduce the accuracy to $18.7\%$. The comparison between the clean images and the generated samples is shown in Fig.~\ref{fig:adv_mnist}. We can observe that the FM-RE model is able to modify the clean digits in order to misguide the classifier. A few images are perturbed to resemble a different digit (e.g., an `1' changed to a `7'), while others retain their original human-perceived digit but are misclassified by the model. 

\begin{figure}[htb]
    \centering
    \includegraphics[width=1\linewidth]{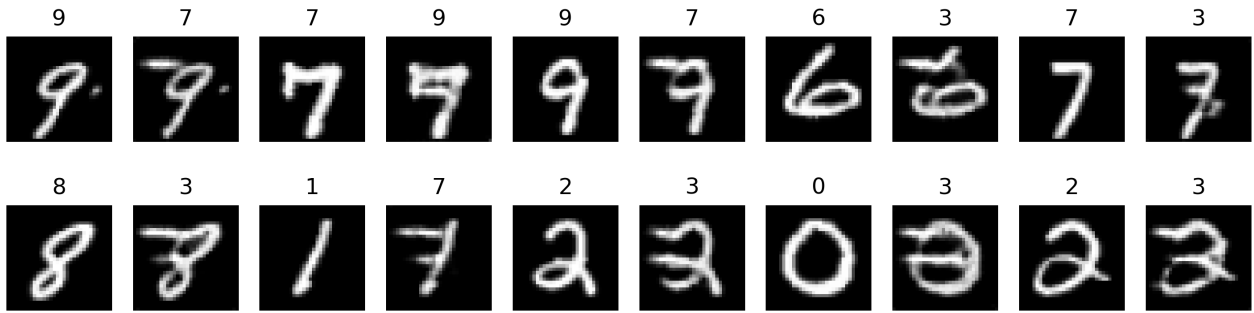}
    \caption{The comparison between the clean images and generated adversarial examples. For each image pair, the left is the clean image and the right is the generated adversarial example. The predicted class by the pre-trained LeNet-$5$ is annotated above each image.}
    \label{fig:adv_mnist}
\end{figure}
\subsubsection{ResNet-$\mathbf{50}$ Model for CIFAR-$\mathbf{10}$ Images Classification}

ResNet-$50$ \citep{resnet50} is a widely-used deep CNN structure with residual connections. The pre-trained model's accuracy on CIFAR-$10$ is $95.3\%$. We also split the testing set of CIFAR-$10$ into our training and testing set using an $80:20$ ratio. 

FM-RE is evaluated in the same way as the evaluation approach for LeNet-$5$. 
As shown in Table~\ref{tb:advres}, PG-FM can significantly impact the accuracy of ResNet-$50$.
We can also observe the samples shown in Fig.~\ref{fig:adv_cifar} that generated images can be recognized by humans, however cannot be correctly classified by ResNet-$50$.

Although the main purpose of this case is to illustrate FM-RE's adaptability to complex constraints,
it is worth mentioning that images generated by FM-RE usually have a larger $\ell_2$ norm w.r.t. the clean images compared to state-of-the-art hard-label black-box adversarial example generation methods \citep{Cheng2020Sign,Park_2024_WACV,Wieland}. The reason is that FM-RE's objective does not include reducing $\ell_2$ norm. Instead, it seeks to regenerate the image using a generative model, which inherently creates new patterns. Although new patterns are introduced, the generated images remain visually similar to the originals, as shown in both cases. Moreover, FM-RE has the following advantages, 
\begin{enumerate}
    \item A common strategy for identifying potential adversarial example queries is by checking repeated queries for similar images. This might not be effective against FM-RE. FM-RE's training requires diverse queries, provided that the selected $x_1$ are sufficiently different. 
    \item FM-RE requires no query access when generating images, leading to fast generation.
    Also, one can generate an infinite number of potential adversarial examples for a single image by repeatedly sampling $x_0$. 
\end{enumerate}

The first advantage of FM-RE provides additional insight into the protection of image classification models. A common and effective defense is to block the sender when receiving repeated similar queries, however, this is not enough since diverse queries may also be used to train an adversarial example generator. Monitoring unexpected features in the queried images can potentially identify queries that aim to create adversarial examples.

In conclusion, the adversarial example generation task illustrates that FM-RE is capable of adapting to complex and unclear constraints, even when the constraints only provide the information of membership.

\begin{figure}[htb]
    \centering
    \includegraphics[width=1\linewidth]{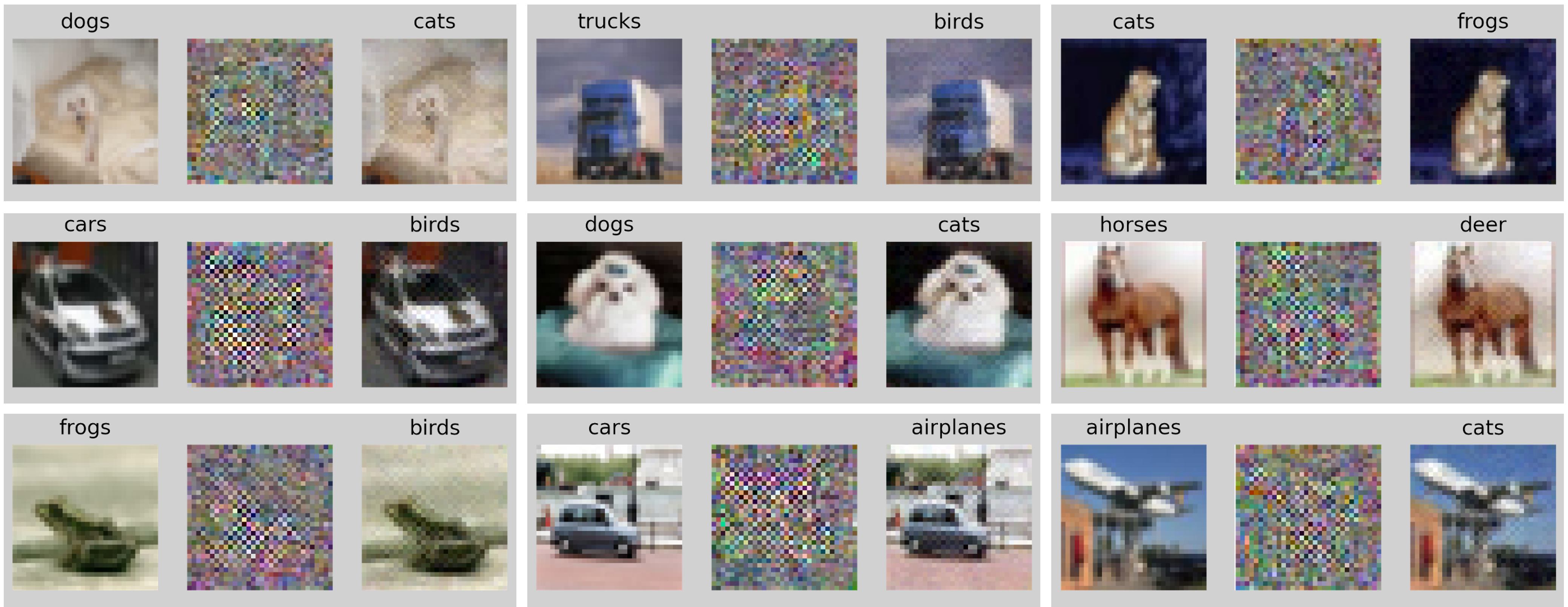}
    \caption{The comparison between the clean images and generated adversarial examples. For each image pair, the left is the clean image and the right is the generated adversarial example. The middle is the amplified perturbation between the generated images and the clean images. The predicted class by the pre-trained ResNet-50 is annotated above each image.}
    \label{fig:adv_cifar}
\end{figure}

\end{document}